\titlespacing*{\subsection}{0pt}{*0.5}{*0.5}
\titlespacing*{\section}{0pt}{*0.5}{*0.5}
\renewcommand{\paragraph}[1]{\noindent\textbf{#1}}
\newcommand{\ip}[2]{\left\langle #1, #2 \right\rangle}
\newcommand{\gammah}{\gamma_{\scriptstyle \mathcal{H}}}
\newcommand{\gammaa}{\gamma_{\scriptstyle \mathcal{A}}}
\newcommand{\gammahc}{\gamma^c_{\scriptstyle \mathcal{H}}}
\newcommand{\tauh}{\tau_{\scriptstyle \mathcal{H}}}
\newcommand{\Qa}{Q_{\scriptstyle \mathcal{A}}}
\def\RE{{\rm RE}}
\def\AE{{\rm AE}}
\def\xE{{\rm xE}}
\def\Regret{{\rm Regret}}
\newcommand{\One}{\mathbbm{1}}
\def\Ber{{\rm Ber}}
\def\vlambda{{\bm{\lambda}}}
\def\valpha{{\bm{\alpha}}}
\def\hcirc{{\hat{\circ}}}
\def\haty{{\hat{y}}}
\def\hatu{{\hat{u}}}
\def\hatH{{\widehat{H}}}
\newcommand{\spx}[1]{%
  \if\relax\detokenize{#1}\relax
    \expandafter\@gobble
  \else
    \expandafter\@firstofone
  \fi
  {^{#1}}%
}
\newcommand\pd[3][]{\frac{\partial\spx{#1}#2}{\partial#3\spx{#1}}}
\def\eqref#1{equation~\ref{#1}}
\def\1{\bm{1}}
\def\vf{{\bm{f}}}
\def\vg{{\bm{g}}}
\def\vh{{\bm{h}}}
\def\vq{{\bm{q}}}
\def\vu{{\bm{u}}}
\def\vv{{\bm{v}}}
\DeclareMathAlphabet{\mathsfit}{\encodingdefault}{\sfdefault}{m}{sl}
\SetMathAlphabet{\mathsfit}{bold}{\encodingdefault}{\sfdefault}{bx}{n}
\newcommand{\E}{\mathbb{E}}
\newcommand{\R}{\mathbb{R}}
\DeclareMathOperator*{\argmax}{arg\,max}
\DeclareMathOperator{\sign}{sign}
\theoremstyle{plain}
\newtheorem{thm}{Theorem}[section]
\newtheorem{lemma}[thm]{Lemma}
\newtheorem{corollary}[thm]{Corollary}
\newtheorem{assumption}{Assumption}
\theoremstyle{definition}
\theoremstyle{remark}
\newtheorem{example}{Example}
\crefname{assumption}{Assumption}{Assumptions}
\crefname{section}{Sec.}{Sections}
\crefname{proposition}{Prop.}{Propositions}
\crefname{definition}{Def.}{Definitions}
\title{The Burden of Interactive Alignment with\\Inconsistent Preferences}
\author{%
  Ali Shirali\\
  UC Berkeley\\
}
\begin{document}

\maketitle

\begin{abstract}

From media platforms to chatbots, algorithms shape how people interact, learn, and discover information. Such interactions between users and an algorithm often unfold over multiple steps, during which strategic users can guide the algorithm to better align with their true interests by selectively engaging with content. However, users frequently exhibit inconsistent preferences: they may spend considerable time on content that offers little long-term value, inadvertently signaling that such content is desirable. Focusing on the user side, this raises a key question: \emph{what does it take for such users to align the algorithm with their true interests?}

To investigate these dynamics, we model the user’s decision process as split between a rational ``system~$2$'' that decides whether to engage and an impulsive ``system~$1$'' that determines how long engagement lasts. We then study a multi-leader, single-follower extensive Stackelberg game, where users, specifically system~$2$, lead by committing to engagement strategies and the algorithm best-responds based on observed interactions. We define the burden of alignment as the minimum horizon over which users must optimize to effectively steer the algorithm. We show that a critical horizon exists: users who are sufficiently foresighted can achieve alignment, while those who are not are instead aligned to the algorithm’s objective. This critical horizon can be long, imposing a substantial burden. However, even a small, costly signal (e.g., an extra click) can significantly reduce it. Overall, our framework explains how users with inconsistent preferences can align an engagement-driven algorithm with their interests in a Stackelberg equilibrium, highlighting both the challenges and potential remedies for achieving alignment.

\end{abstract}
\section{Introduction}

We study the interactions between human users and an algorithm over multiple steps. While these interactions may benefit both parties, the user and the algorithm can have misaligned interests. \textbf{Our focus is on the user side, asking what it takes for a user to \emph{align} the algorithm with her interests}. We explore this alignment problem in an \emph{interactive} environment where users may exhibit \emph{inconsistent preferences} within an incentive-aware framework. The following explains our setting in detail.

\paragraph{Our setting and model.}
When users have \emph{consistent} preferences---i.e., the engagement length is in proportion to user's true \emph{reward}---the alignment problem reduces to engagement maximization. Recent advances in designing instruction-following language models rely on this assumption of consistent preferences, often using models such as Bradley-Terry \citep{bt} to directly or indirectly infer human rewards and optimize them to maximize human approval \citep{christiano2017deep,ouyang2022training,rafailov2024direct}. Similarly, recommender systems are typically designed to optimize recommendations that maximize user engagement \citep{besbes2024fault}.

Users can, however, have \emph{inconsistent} preferences, where their actions may not reflect their true interests. This often occurs when a user's decision results from a combination of impulsive system~$1$ and rational system~$2$ processes \citep{kmr,agan2023automating,agarwal2024system}, or when consumption choices are influenced by both long-term benefits (enrichment) and a desire for instant gratification (temptation) \citep{anwar2024recommendation}. In such cases, revealed preferences do not necessarily align with the user's true preferences.

In general, due to inconsistent preferences or misaligned objectives \citep{zhuang2020consequences,milli2021optimizing,bakker2022fine}, the algorithm's goal may diverge from the user’s. Can a strategic user then still succeed in aligning the algorithm with her interests? This question is central to our study, where we examine the conditions and strategies that enable users to steer the algorithm despite such misalignments.

We model the alignment problem as a multi-leader, single-follower Stackelberg game, where users act as leaders by committing to strategies, and the algorithm best responds. We group users with similar interests under the same \emph{user type} to capture preference heterogeneity. Our model reflects real-world settings in which users can initiate or terminate interactions, and excludes scenarios such as strategic classification~\citep{hardt2016strategic,dong2018strategic} where the algorithm first commits to a strategy, such as a classifier. By assuming the algorithm best responds, we abstract away its \emph{learning} process~\citep{haghtalab2022learning,donahue2024impact}. This is reasonable as modern systems interact frequently with users and learn from abundant behavioral data.

Our Stackelberg game captures a dynamic setting where a user interacts with the algorithm over \emph{multiple steps} within a session. After each interaction, the algorithm updates its posterior over the user’s type, gradually personalizing its responses. As a result, the strategies of other users affect how well an individual can steer the algorithm. The user’s goal is to guide the algorithm toward higher-reward recommendations over time. However, she must trade off long-term signaling with short-term rewards, creating a tension that places the burden of alignment on the user.

\paragraph{Our results.}
To quantify the burden of alignment on the user, we ask: over what future horizon must the user optimize her strategy for the cost of alignment to be worthwhile? We propose this horizon as a measure of burden. When users exhibit inconsistent preferences, system~$2$ must be sufficiently foresighted to align the algorithm with user’s true interests. We show that each user type has a critical threshold: only if the user’s horizon exceeds this can she steer the algorithm; otherwise, similar to \citep{zrnic2021leads}, the algorithm effectively aligns her to its own objective, despite the user leading the game.

The significant burden on users prompts the question: what design features can ease alignment? We explore a setting where users can exert observable effort during each interaction---such as clicking a small, non-beneficial button---as a way to signal system~$2$'s strong disinterest. Though the effort yields no direct reward, it enables users to convey preferences without disengaging, simplifying their strategies. As in the baseline setting, alignment still requires a critical optimization horizon. However, allowing users to ``burn effort'' as a signal~\citep{hartline2008optimal} substantially reduces the burden of alignment.

In summary, we present a framework to analyze the alignment problem between an optimal algorithm and users with inconsistent preferences. By fully characterizing the equilibria in a multi-leader Stackelberg game, where users lead, we quantify the burden of alignment in terms of the horizon over which users must optimize. Our framework also guides the design of improved alignment environments by quantifying how costly signaling through effort-burning can reduce this burden. As noted by \citet{dean2024accounting}, we believe that such formal mathematical models can provide valuable insights for the broader field of interaction design in future studies.

\paragraph{Additional related work.}
The alignment problem has been studied across computer science, social science, and economics as a game between users and platforms/algorithms. The most closely related setting is a Stackelberg game in which the leader---typically the platform---commits to a strategy over multiple interactions, and the follower---a user---responds. Because recommendations unfold over multiple steps, users may act strategically, anticipating how their responses influence future outcomes. \citet{haupt2023recommending} term such users \emph{strategic}, in contrast to \emph{myopic} users who optimize locally. Similar to our work, they show that users tend to engage in behaviors that accentuate differences relative to users with other preference profiles. \citet{haupt2023recommending} and \citet{cen2024measuring} provide empirical support for this behavior, while \citet{cen2023user} study its implications for platform utility.

Our work departs from these studies in two key ways. First, we center users as leaders in the Stackelberg game, focusing on settings where users---particularly system~$2$---can commit to strategies that the platform observes (e.g., via repeated interactions or prior data). This shift allows us to ask what it takes for users, as leaders, to maximize their own rewards. Second, we extend the analysis to a multi-leader, single-follower game and examine how the presence of other users shapes individual strategies, drawing parallels to Nash equilibrium.
Please refer to \cref{sec:related} for an extensive review of related work.
\section{Model}

We model the interactions between an \emph{algorithm} and \emph{users}. The users aim to find specific responses or recommendations, while the algorithm seeks to maximize user engagement. We first outline the structure of these interactions and then present a detailed model for both the users and the algorithm.

The user-algorithm interactions occur in \emph{sessions}. Each session involves multiple \emph{interactions} between the algorithm and a \emph{fixed user}. In each interaction, the algorithm suggests an \emph{item/content} and observes whether the user engages with it and for how long. The algorithm uses these observations to refine future suggestions. For users, engagement generates a \emph{reward}, which is higher when the item aligns with their true interest. We assume users have fixed interests throughout a session. 

For strategic users, engagement serves a dual purpose: consuming rewards and signaling preferences. If a rational user fully controlled both engagement and its duration, aligning with the algorithm could be as simple as engaging more with preferred content. However, users often exhibit \emph{inconsistent preferences}, sometimes spending time on content misaligned with their true interests. This inconsistency complicates alignment and motivates our study.

\subsection{Model of human users}

We model a user by their \emph{type}~$\theta \in \Theta$, which encodes the user's intention during a session. Thus, the same person may have different types across sessions. For simplicity and tractability, we focus on the case where the set of user types~$\Theta$ is finite.

Upon receiving a recommendation~$s \in S$, the user makes two decisions: whether to engage, and if so, for how long. Let $y \in Y \subseteq \R^{\ge 0}$ denote the user’s response, where $y > 0$ indicates engagement lasting~$y$ steps. We model human users with potentially inconsistent preferences. The decision to engage is governed by the user's fully rational system~$2$. If the user chooses to engage, the length of engagement is determined by system~$1$, with an expected value of $\E[y \mid y > 0; \theta, s] = 1/\alpha_\theta(s)$. Note that in our analysis of optimal decision-making, the expected value of~$y$ is the only important factor, which we can interpret as the expected utility of the algorithm when a user of type~$\theta$ engages with content~$s$. When $y$~follows a geometric distribution with a success rate of~$\alpha_\theta(s)$, our model reduces to that of \citet{kmr}.\footnote{To further clarify the connection to \citet{kmr}, we treat the entire interaction that system~$1$ has after system~$2$ decides to engage as a single item. While the user may consume different things during engagement, from system~$2$’s perspective, the only relevant parameter is the expected reward as we will see, and from the platform’s perspective, it is the expected length of engagement. What exactly gets consumed during the engagement does not reveal information about the user’s type, since it wasn’t the result of a rational/strategic decision and the platform only records engagement decisions as we shall see.}

The user receives an expected reward of~$r_\theta(s)$ upon engagement. This reward is independent of engagement length, without loss of generality: even if realized rewards depend on duration, the length is independently governed by system~$1$, so its effect can be absorbed into the expectation $r_\theta(s)$.

The user discounts future rewards by a factor~$\gammah < 1$ per time step.\footnote{We have implicitly assumed that users might discount rewards between interactions, but not during a single interaction, so our previous point on the irrelevance of the length of the engagement and reward remains intact.} This parameter is central to our analysis of the foresight required for a user to align the algorithm with her interests. We assume a uniform~$\gammah$ across all users, though our analysis readily extends to a heterogeneous population.


\subsection{Model of algorithm}

We assume the algorithm maximizes engagement, a common proxy for objectives in human-facing systems; e.g., video recommendation engines optimize for prolonged viewing to increase ad revenue. Our framework also extends beyond engagement: if $y$ is the algorithm’s utility from an interaction, the algorithm can be viewed as a utility maximizer. This general perspective encompasses any policy or language model that interacts with users to optimize a utility measure, such as human approval.

Formally, let $H = (s_1, y_1, \cdots, s_T, y_T)$ represent the history of interactions in a session. The algorithm's realized utility from~$H$ is $\sum_{t=1}^T y_t$. However, like human users, the algorithm may also be myopic, discounting future returns by a factor of~$\gammaa < 1$ per step. Thus, the algorithm's valuation of~$H$ at the start of the session is $\sum_{t=1}^T \gammaa^{t - 1} y_t$.
Given the models of the users and the algorithm, we next define their respective strategies and introduce a notion of equilibrium.

\section{Strategies and equilibria}

The algorithm’s strategy $\pi: (S \times Y)^* \to S$ maps the session history $H = (s_1, y_1, s_2, y_2, \dots)$ to the next recommendation in general. On the user side, only the engagement decision is strategic: a user of type~$\theta$ engages with content~$s$ with probability~$f_\theta(s)$. We denote the user strategy profile by $\vf = (f_\theta)_{\theta \in \Theta}$. We now characterize the values induced by these strategies, formulate the optimization problems faced by value-maximizing users and the algorithm, and introduce the resulting equilibria.

\subsection{Algorithm's strategy and value}

For simplicity, we assume that the algorithm only acts on the binary engagement history $\hatH = (s_1, \haty_1 \coloneqq \One\{y_1 > 0\}, s_2, \haty_2 \coloneqq \One\{y_2 > 0\}, \dots)$ rather than the full history~$H$. Suppose the algorithm has a prior~$\vlambda = (\lambda_\theta)_{\theta \in \Theta}$ over~$\Theta$. The \emph{Q-value} of the algorithm starting with content~$s$ is
\begin{equation*}
    \Qa(\vlambda, s; \vf, \pi) \coloneqq \E_{\theta \sim \vlambda} \Big[ \E_{H \sim (f_\theta, \pi \mid s)} \big[ \sum_{(s_t, y_t) \in H} \gammaa^{t-1} y_t \big] \Big]
    \,.
\end{equation*}
Here, $H \sim (f_\theta, \pi \mid s)$ is shorthand for the distribution over the possibly infinite-length history induced by the user strategy $f_\theta$ and the algorithm strategy $\pi$, starting with~$s$.

Although the algorithm’s policy~$\pi$ may depend on the full history~$\hatH$, optimal decision-making depends only on the posterior~$[\vlambda \mid \hatH; \vf]$ over user types. This follows directly from the Bellman updates, which we prove in \cref{lem:bellman_update} for completeness. Overloading notation, we define $\Qa(\vlambda, s; \vf) \coloneqq \max_{\pi} \Qa(\vlambda, s; \vf, \pi)$. The Bellman update then allows us to compute $\Qa(\vlambda, s; \vf)$ recursively as
\begin{align}
\label{eq:Qa_Bellman}
    \Qa(\vlambda, s; \vf) = 
    \E_{\theta \sim \vlambda}\Big[ \frac{f_\theta(s)}{\alpha_\theta(s)} +
    \gammaa \, \E_{\haty \sim \Ber(f_\theta(s))}\Big[
    \max_{s'} \Qa\Big(\big[\vlambda \mid (s, \haty); \vf\big], s'; \vf\Big) 
    \Big]\Big]
    \,.
\end{align}
This plays a pivotal role in our analysis of the best strategies. Once $\Qa$ is found, the optimal policy is
\begin{equation*}
    \pi^*(\hatH) \in \argmax_s \Qa\big([\vlambda \mid \hatH; \vf], s; \vf\big)
    \,.
\end{equation*}
%


\subsection{User's strategy and value}

The user’s strategy specifies the engagement probability~$f_\theta: S \to [0, 1]$ over content space~$S$. Given the algorithm’s strategy~$\pi$ and initial content~$s$, the \emph{Q-value} for a user of type~$\theta$ is
\begin{equation*}
    Q_\theta(s; \vf, \pi) \coloneqq \E_{H \sim (f_\theta, \pi \mid s)} \Big[ \sum_{(s_t, y_t) \in H} \One\{y_t > 0\} \, \gammah^{t-1} \, r_\theta(s_t) \Big]
    \,.
\end{equation*}
Let $\vlambda$ denote the algorithm's current posterior over user types. Suppose the algorithm uses the strategy profile~$\vf'$ to calculate this posterior, where $\vf'$ is not necessarily the same as~$\vf$. Overloading the notation, we obtain the following Bellman update for the user's Q-value:
\begin{equation}
\begin{aligned}
\label{eq:Qh_Bellman}
    &Q_\theta(\vlambda, s; \vf, \vf') =
    f_\theta(s) \, r_\theta(s) \\
    &\quad + \gammah \, \E_{\haty \sim \Ber(f_\theta(s))} 
    Q_\theta\Big(\big[\vlambda \mid (s, \haty); \vf'\big], \argmax_{s'} \Qa\big(\big[\vlambda \mid (s, \haty); \vf'\big], s'; \vf'\big); \vf, \vf' \Big)
    \,.
\end{aligned}
\end{equation}
When taking the $\argmax$, we can assume any item that maximizes the $\Qa$ is chosen. 


\subsection{Equilibrium definition}

We consider a multi-leader, single-follower Stackelberg equilibrium. In this setup, users (leaders) commit to a strategy~$\vf$, and the algorithm (follower) best responds to~$\vf' = \vf$. Although the algorithm does not directly observe~$\vf$, the assumption of $\vf' = \vf$ is reasonable given that modern algorithms have access to vast amounts of data and computational resources to infer user strategies. Additionally, we assume that no individual user (leader) has an incentive to unilaterally deviate from the equilibrium strategy. From the users' perspective, this implies they are in a mixed-strategy Nash equilibrium.

To formalize equilibrium, we consider two session entry scenarios: \emph{random entry} ($\RE$) and \emph{algorithmic entry} ($\AE$). Under random entry, the user stumbles upon an initial content~$s$, for example by landing on a video platform. Let $s$ be drawn from a distribution~$p_1$, and let the prior over user types~$\vlambda$ be common knowledge. At equilibrium, for every $\theta \in \Theta$, we have
\begin{equation}
    \label{eq:re_equil}
    f_\theta^\RE \in \argmax_{f_\theta} V_\theta^\RE\big(\vlambda; (f_\theta, \vf_{-\theta}^\RE)\big) \coloneqq \E_{s_1 \sim p_1}\Big[Q_\theta\big(\vlambda, s_1; (f_\theta, \vf_{-\theta}^\RE), (f_\theta, \vf_{-\theta}^\RE)\big)\Big]
    \,.
\end{equation}
In the case of \emph{algorithmic entry}, the algorithm recommends the first item to the user at the start of the session. Given that the prior over user types~$\vlambda$ is common knowledge, the recommended item is $s_1 \in \argmax_s \Qa(\vlambda, s; \vf)$. Thus, at equilibrium, for every $\theta \in \Theta$, we have
{
\small
\begin{equation}
    \label{eq:ae_equil}
    f_\theta^\AE \in \argmax_{f_\theta} V_\theta^\AE\big(\vlambda; (f_\theta, \vf_{-\theta}^\AE)\big) \coloneqq Q_\theta\Big(\vlambda, \argmax_s \Qa\big(\vlambda, s; (f_\theta, \vf_{-\theta}^\AE)\big); (f_\theta, \vf_{-\theta}^\AE), (f_\theta, \vf_{-\theta}^\AE)\Big)
    .
\end{equation}
}The study of these general notions of equilibrium can be intractable. Therefore, we next define a special case of interest that enables us to characterize equilibria and analyze their properties. 

\subsection{Special case: Inconsistent actions and rewards}
\label{sec:sepcial_case}

For a user of type~$\theta$, steering the algorithm via system~$2$'s engagement decisions is challenging when system~$1$'s engagement length (or, equivalently, the algorithm's utility) is misaligned with the user’s reward. This challenge is amplified when users with complementary interests shape the algorithm’s default behavior, making it harder for type~$\theta$ to distinguish herself. We formalize this case below.

Suppose there are two possible (types of) items: $S = \{a, b\}$. Item~$a$ is more tempting for everyone, so $1/\alpha_\theta(a) > 1/\alpha_\theta(b)$ for every $\theta \in \Theta$. For some types of users, $\Theta_1 \subseteq \Theta$, item~$b$ is more rewarding, i.e., $r_\theta(b) > r_\theta(a)$, however, the remaining types $\Theta_2 = \Theta \setminus \Theta_1$ have interests aligned with the algorithm, i.e., $r_\theta(a) > r_\theta(b)$. We refer to users with type $\theta \in \Theta_1$ as type~$1$ users and those with type $\theta \in \Theta_2$ as type~$2$ users. For easy reference, we summarize this special case in \cref{tab:special_case}. Note that type~$a$ and~$b$ contents can have different interpretations, such as being popular versus niche items \citep{besbes2024fault}.

\begin{table}[h]
    \centering
    \caption{Special case of interest where type~$1$ users have inconsistent actions and rewards}
    \begin{tabular}{ccc}
        \toprule
        User type & Engagement length & Reward \\
        \midrule
        $\theta \in \Theta_1$ & \multirow{2}{*}{$\frac{1}{\alpha_\theta(a)} > \frac{1}{\alpha_\theta(b)}$} & $r_\theta(a) < r_\theta(b)$ \\
        $\theta \in \Theta_2$ &  & $r_\theta(a) > r_\theta(b)$ \\
        \bottomrule
    \end{tabular}
    \label{tab:special_case}
\end{table}
When type~$1$ users engage with content~$a$, the algorithm receives utility $1/\alpha_\theta(a)$---higher than from content~$b$. If system~$2$ always engages, the algorithm has no incentive to recommend~$b$. To steer the algorithm away from~$a$, system~$2$ must reduce its engagement probability such that $f_\theta(a) < \frac{\alpha_\theta(a)}{\alpha_\theta(b)}$. Refusing to engage (1) signals the user’s type, (2) discourages recommendation of tempting content, but (3) incurs a cost of $(1 - f_\theta(a)) \, r_\theta(a)$ when~$a$ is shown. This trade-off complicates the user’s strategy. In the next section, we will analyze the resulting best strategies of this special case, but before that, to further contextualize our model and this special case, consider two examples:

\begin{example}
A user opens a music recommender system while working, with the intent of listening to calm music. If the user (system~$2$) chooses to engage with the platform, they select a starting music~$s$, after which the platform autoplays subsequent items. The number of musics autoplayed after~$s$ until the user disengages and selects another entry music~$s'$ defines the length of engagement~$y$, determined by system~$1$. The platform benefits from longer engagement, e.g., through ad revenue, whereas the user benefits from listening to calm music while working. However, suppose the user is also a fan of singer~X, whose music is engaging but distracting. This user is then a type~$1$ user, and the platform must choose between recommending calm music (type~$b$) or X’s music (type~$a$) during the working session. Note that during the working session, the user’s intent remains fixed. Yet, on a later occasion, say, when relaxing, the same user’s intent may shift toward listening to X’s music for an extended period, effectively becoming a type~$2$ user.
\end{example}

\begin{example}
A chatbot that charges per API call may operate in several ``modes.'' In an educational mode, longer conversations are valuable for a student, aligning incentives between user and platform (type~$2$). In contrast, for an engineer seeking a quick answer, shorter sessions are preferable (type~$1$). Similarly, a therapy chatbot operating in an affirmative mode may sustain longer conversations by offering emotionally validating responses, even if doing so delays meaningful progress. Here, deciding whether to engage with a psychologist chatbot in a mode is a system~$2$ decision, while the duration of the conversation is governed by system~$1$, and what should be the default mode every time while the user is going through weeks of therapy is the algorithm's choice.
\end{example}

\section{Characterizing equilibria: Algorithm's best response}

In this section, we characterize the algorithm's optimal strategy for the special case outlined in \cref{sec:sepcial_case}. Specifically, we show that, under reasonable assumptions, the algorithm's Q-value is piecewise linear in the prior~$\vlambda$ over user types and that the algorithm's strategy behaves as a linear classifier acting on~$\vlambda$. Building on this, in the next section, we will characterize the users' optimal strategies for maximizing their value at equilibrium under both random and algorithmic entries.

To solve the algorithm's Q-value from the Bellman update in \cref{eq:Qa_Bellman}, we first restrict~$\vf$ under the following reasonable assumptions to avoid pathological cases.
\begin{assumption}
\label{assump:limit_f}
Let $s_\theta^* \in \argmax_s r_\theta(s)$ be the highest rewarding content for user type~$\theta$. We assume that every user of type~$\theta$ always engages with $s_\theta^*$, i.e., $f_\theta(s_\theta^*) = 1 \,, \forall\theta \in \Theta$,
and no user chooses occasional engagement with~$s$, i.e., $f_\theta(s) < 1$, if that does not discourage the algorithm about~$s$, so
\begin{equation*}
    f_\theta(s) \in \big[0, \frac{\alpha_\theta(s)}{\alpha_\theta(s_\theta^*)}\big) \cup \{1\} \,, \quad\quad \forall\theta \in \Theta, \forall s \in S
    \,.
\end{equation*}
\end{assumption}
In our special case of interest, this assumption implies
\begin{align}
    \label{eq:assump_1}
    &&&f_\theta(a) \in \big[0, \frac{\alpha_\theta(a)}{\alpha_\theta(b)}\big) \cup \{1\} \,, \quad f_\theta(b) = 1 \,, &&\forall \theta \in \Theta_1 \,,&\\
    \label{eq:assump_2}
    &&&f_\theta(a) = 1 \,, &&\forall \theta \in \Theta_2 \,.&
\end{align}
Given this restricted user strategy profile, we now present the algorithm's best response:
\begin{theoremEnd}[restate]{thm}[Algorithm's best response]
\label{thm:alg_best_response}
Given that the algorithm has a posterior~$\vlambda$ over $\Theta$, it will best respond by recommending item~$a$ if and only if $\sum_{\theta \in \Theta} h_\theta \lambda_\theta \ge 0$, where
\begin{equation}
    \label{eq:def_h}
    h_\theta = \frac{1 - \gammaa}{(1 - \gammaa f_\theta(a)) \, (1 - \gammaa f_\theta(b))} \Big[\frac{f_\theta(a)}{\alpha_\theta(a)} - \frac{f_\theta(b)}{\alpha_\theta(b)} \Big]
    \,.
\end{equation}
\end{theoremEnd}
\begin{proofEnd}
    For notational simplicity, we omit~$\vf$ from the notation for~$\Qa$. To streamline the proof presentation, we use the following conventions: we use $-s$ to refer to the alternative content to~$s$. The inner product of two vectors~$\vu$ and~$\vv$ is denoted by $\ip{\vu}{\vv}$, while the element-wise product is denoted by $\vu \circ \vv \coloneqq (u_\theta v_\theta)_\theta$. When one side of this product is a distribution, the normalized product is defined as $\vu \hcirc \vv = \frac{\vu \circ \vv}{\ip{\vu}{\vv}}$. Division, such as $\vu / \vv$, denotes element-wise division.

    Using this notation, we can rewrite the algorithm's Bellman update from \cref{eq:Qa_Bellman} as follows. First, the posterior $[\vlambda \mid (s, \haty)]$ simplifies to the following form:
    \begin{equation*}
        [\vlambda \mid (s, \haty)] = \begin{cases}
            \vlambda \hcirc \vf(s) \,, & \haty = 1 \,, \\
            \vlambda \hcirc (1 - \vf(s)) \,, & \haty = 0 \,.
        \end{cases}
    \end{equation*}
    We can also express the expected immediate reward $\E_\theta[f_\theta(s)/\alpha_\theta(s)]$ as $\ip{\vlambda}{\vf(s)/\valpha(s)}$. Using this, the Bellman update for~$\Qa$ becomes
    \begin{equation}
    \begin{aligned}
        \label{eq:_proof_Qa_Bellman}
        \Qa(\vlambda, s) &= \ip{\vlambda}{\vf(s)/\valpha(s)} \\
        &+ \gammaa \ip{\vlambda}{\vf(s)} \max_{s'} \Qa\big(\vlambda \hcirc \vf(s), s'\big) \\
        &+ \gammaa \ip{\vlambda}{(1 - \vf(s))} \max_{s'} \Qa\big(\vlambda \hcirc (1-\vf(s)), s'\big)
        \,.
    \end{aligned}
    \end{equation}
    Note that $\vlambda \hcirc \vf(s)$ or $\vlambda \hcirc (1 - \vf(s))$ may be undefined if all users choose to either fully engage or fully disengage. However, since the second and third terms are also multiplied by $\ip{\vlambda}{\vf(s)}$ and $\ip{\vlambda}{(1 - \vf(s))}$ respectively, this issue can be neglected. We prove that the following Q-function solves \cref{eq:_proof_Qa_Bellman}:
    \begin{equation}
        \label{eq:_proof_Qa}
        \Qa(\vlambda, s) = \ip{\vlambda}{\vq(s)} + \gammaa \, \max \Big\{
        \ip{\vlambda}{(\vq(-s) - \vq(s)) \circ \vf(s)}, 
        0 
        \Big\}
        \,,
    \end{equation}
    where
    \begin{equation}
        \label{eq:_proof_def_q}
        \vq(s) \coloneqq \frac{\vf(s)}{1 - \gammaa \vf(s)}  \circ \frac{1}{\valpha(s)} + 
        \gammaa \frac{\vf(-s) \circ (1 - \vf(s))}{(1 - \gammaa \vf(s)) \circ (1 - \gammaa \vf(-s))} \circ \frac{1}{\valpha(-s)}
        \,.
    \end{equation}
    One can verify that $\vh$ in \cref{eq:def_h} can be expressed as $\vq(a) - \vq(b)$. Before proving \cref{eq:_proof_Qa}, we first show that it implies $\vh \coloneqq \vq(a) - \vq(b)$ serves as the linear classifier that determines the algorithm's policy:
    \begin{theoremEnd}{lemma}
    \label{lem:q_to_h}
    $\Qa(\vlambda, s) \ge \Qa(\vlambda, -s) \iff \ip{\vlambda}{\vq(s) - \vq(-s)} \ge 0
    \,.$
    \end{theoremEnd}
    \begin{proofEnd}
        Suppose $\ip{\vlambda}{\vq(s)} \ge \ip{\vlambda}{\vq(-s)}$ for some~$s$. \cref{lem:f_favors} implies
        \begin{align*}
            \ip{\vlambda}{(\vq(s) - \vq(-s)) \circ \vf(-s)} 
            &= -\ip{\vlambda \circ \vf(-s)}{\vq(-s) - \vq(s)} \\ 
            &\le -\ip{\vlambda}{\vq(-s) - \vq(s)}
            = \ip{\vlambda}{\vq(s) - \vq(-s)}
            \,.
        \end{align*}
        Plugging this into $\Qa(\vlambda, -s)$, as defined in \cref{eq:_proof_Qa}, yields
        \begin{align*}
            \Qa(\vlambda, -s) &\le \ip{\vlambda}{\vq(-s)} + \gammaa \ip{\vlambda}{\vq(s) - \vq(-s)} \\
            &\le \ip{\vlambda}{\vq(-s)} + \gammaa \ip{\vlambda}{\vq(s) - \vq(-s)} + (1 - \gammaa) \ip{\vlambda}{\vq(s) - \vq(-s)} \\
            &= \ip{\vlambda}{\vq(s)} 
            \le \Qa(\vlambda, s)
            \,.
        \end{align*}
        This completes the proof.
    \end{proofEnd}
    The proof of this lemma relies on the following lemma, which we will use again later on.
    \begin{theoremEnd}{lemma}
    \label{lem:f_favors}
    $\ip{\vlambda \circ \vf(s)}{\vq(s) - \vq(-s)} \ge \ip{\vlambda}{\vq(s) - \vq(-s)}
    \,.$
    \end{theoremEnd}
    \begin{proofEnd}
        We first find a simplified expression for $\vq(s) - \vq(-s)$. Using the definition in \cref{eq:_proof_def_q}, a straightforward calculation gives
        \begin{equation}
            \label{eq:_proof_dif_q}
            \vq(s) - \vq(-s) = \frac{1 - \gammaa}{(1 - \gammaa \vf(s)) \circ (1 - \gammaa \vf(-s))}
            \circ \big[\vf(s) \circ \frac{1}{\valpha(s)} - \vf(-s) \circ \frac{1}{\valpha(-s)} \big]
            \,.
        \end{equation}

        For a content~$s$, under \cref{assump:limit_f}, if $s=s^*_\theta$, then $f_\theta(s) = 1$. Otherwise, either $f_\theta(s) = 1$ or $f_\theta(s) < \alpha_\theta(s)/\alpha_\theta(-s)$. Therefore, we can divide~$\Theta$ into two groups where in one group $f_\theta(s) = 1$ and in the other group $f_\theta(s) < \alpha_\theta(s)/\alpha_\theta(-s)$ and $f_\theta(-s) = 1$. Using this, we have
        \begin{align*}
            \ip{\vlambda \circ \vf(s)}{\vq(s) - \vq(-s)} &= \sum_{\theta \in \Theta} \lambda_\theta f_\theta(s) (q_\theta(s) - q_\theta(-s)) \\
            &= \sum_{\theta: f_\theta(s) < \frac{\alpha_\theta(s)}{\alpha_\theta(-s)}} \lambda_\theta f_\theta(s) (q_\theta(s) - q_\theta(-s))
            + \sum_{\theta: f_\theta(s) = 1} \lambda_\theta (q_\theta(s) - q_\theta(-s)) 
            \,.
        \end{align*}
        For the first group corresponding to the first sum above, \cref{eq:_proof_dif_q} implies that
        \begin{equation*}
            \sign\big(q_\theta(s) - q_\theta(-s)\big) = \sign\big(f_\theta(s) \frac{1}{\alpha_\theta(s)} - \frac{1}{\alpha_\theta(-s)}\big) = -1
            \,.
        \end{equation*}
        Therefore, we can conclude
        \begin{equation*}
            \ip{\vlambda \circ \vf(s)}{\vq(s) - \vq(-s)} \ge \ip{\vlambda}{\vq(s) - \vq(-s)}
            \,.
        \end{equation*}
    \end{proofEnd}
    This lemma also yields the following result that is useful in simplifying the second term of \cref{eq:_proof_Qa_Bellman}:
    \begin{theoremEnd}{lemma}
    \label{lem:max_f}
    $\max_{s'} \Qa(\vlambda, s') = \max_{s'} \ip{\vlambda}{\vq(s')} \,.$
    \end{theoremEnd}
    \begin{proofEnd}
        Suppose $\ip{\vlambda}{\vq(s)} \ge \ip{\vlambda}{\vq(-s)}$ for some~$s$. \cref{lem:f_favors} implies
        \begin{align*}
            \ip{\vlambda}{(\vq(s) - \vq(-s)) \circ \vf(s)} 
            &= \ip{\vlambda \circ \vf(s)}{\vq(s) - \vq(-s)} \\ 
            &\ge \ip{\vlambda}{\vq(s) - \vq(-s)}
            \ge 0
            \,.
        \end{align*}
        Plugging this into $\Qa(\vlambda, s)$, as defined in \cref{eq:_proof_Qa}, yields
        \begin{equation*}
            \Qa(\vlambda, s) = \ip{\vlambda}{\vq(s)}
            \,.
        \end{equation*}
    \end{proofEnd}
    Together \cref{lem:q_to_h,lem:f_favors,lem:max_f} give the following result that is useful in simplifying the third term of \cref{eq:_proof_Qa_Bellman}:
    \begin{theoremEnd}{lemma}
    \label{lem:max_1mf}
    If $\ip{\vlambda}{1 - \vf(s)} > 0$, we have $\max_{s'} \Qa\big(\vlambda \hcirc (1 - \vf(s)), s'\big) = \ip{\vlambda \hcirc (1 - \vf(s))}{\vq(-s)} \,.$
    \end{theoremEnd}
    \begin{proofEnd}
        Assuming $\ip{\vlambda}{1 - \vf(s)} > 0$, \cref{lem:f_favors} implies
        \begin{equation*}
            \ip{\vlambda \circ (1 - \vf(s))}{\vq(s) - \vq(-s)} \le 0 \iff \ip{\vlambda \hcirc (1 - \vf(s))}{\vq(s) - \vq(-s)} \le 0
            \,.
        \end{equation*}
        Then \cref{lem:q_to_h} implies
        \begin{equation*}
            \argmax_{s'} \Qa\big(\vlambda \hcirc (1 - \vf(s)), s'\big) = (-s)
            \,.
        \end{equation*}
        Finally, \cref{lem:max_f} implies
        \begin{equation*}
            \max_{s'} \Qa\big(\vlambda \hcirc (1 - \vf(s)), s'\big) = \ip{\vlambda \hcirc (1 - \vf(s))}{\vq(-s)}
            \,.
        \end{equation*}
    \end{proofEnd}

    Using \cref{lem:max_f,lem:max_1mf} we can write the right-hand side of \cref{eq:_proof_Qa_Bellman} as
    \begin{align}
        &\ip{\vlambda}{\vf(s)/\valpha(s)} 
        + \gammaa \ip{\vlambda}{\vf(s)} \max_{s'} \ip{\vlambda \hcirc \vf(s)}{\vq(s')}
        + \gammaa \ip{\vlambda}{1 - \vf(s)} \ip{\vlambda \hcirc (1 - \vf(s))}{\vq(-s)} \nonumber \\
        &= \ip{\vlambda}{\vf(s)/\valpha(s)} 
        + \gammaa \max_{s'} \ip{\vlambda \circ \vf(s)}{\vq(s')}
        + \gammaa \ip{\vlambda \circ (1 - \vf(s))}{\vq(-s)} \nonumber \\
        \label{eq:_proof_rhs}
        &= \ip{\vlambda}{\vf(s)/\valpha(s)} 
        + \gammaa \max \Big\{ \ip{\vlambda \circ \vf(s)}{\vq(-s) - \vq(s)}, 0 \Big\}
        + \gammaa \ip{\vlambda}{\vq(s) \circ \vf(s) + \vq(-s) \circ (1 - \vf(s))}
        \,.
    \end{align}
    Using $(1 - \vf(s))(1 - \vf(-s)) = 0$ from \cref{assump:limit_f}, we can further simplify the first and third (last) terms by
    \begin{align*}
        &\frac{\vf(s)}{\valpha(s)} + \gammaa \vq(s) \circ \vf(s) + \gammaa \vq(-s) \circ (1 - \vf(s)) \\
        &= \frac{1}{\valpha(s)} \circ \Big[\vf(s) + \gammaa\frac{\vf^2(s)}{1 - \gammaa \vf(s)}\Big] \\
        &+ \gammaa \frac{1}{\valpha(-s)} \circ \Big[\gammaa\frac{\vf(s) \circ \vf(-s) \circ (1 - \vf(s))}{(1 - \gammaa \vf(s)) \circ (1 - \gammaa \vf(-s))} + \frac{\vf(-s) \circ (1 - \vf(s))}{1 - \gammaa \vf(-s)}\Big] \\
        &= \frac{1}{\valpha(s)} \circ \frac{\vf(s)}{1 - \gammaa \vf(s)}
        + \gammaa \frac{1}{\valpha(-s)} \circ \frac{\vf(-s) \circ (1 - \vf(s))}{(1 - \gammaa \vf(s)) \circ (1 - \gammaa \vf(-s))}
        = \vq(s) 
        \,.
    \end{align*}
    Plugging this into \cref{eq:_proof_rhs} gives $\Qa(\vlambda, s)$ as defined in \cref{eq:_proof_Qa}. Therefore, the proposed $\Qa$ solves the Bellman update of \cref{eq:_proof_Qa_Bellman}. This completes the proof.
\end{proofEnd}
This theorem shows that the algorithm best responds by using a linear classifier~$\vh = (h_\theta)_{\theta \in \Theta}$, which acts on the posterior~$\vlambda$. Interestingly, $h_\theta$ depends only on the strategy of user type~$\theta$, i.e., $f_\theta$, with no interaction between users appearing in~$\vh$. This simplifies our characterization of the equilibrium, as we will discuss next.

\section{Characterizing equilibria: User's best response}

Given the algorithm's best response, we now analyze the user's best response, which defines the Stackelberg equilibrium of our game. We demonstrate that the user's strategy often takes a simple form that allows us to identify all equilibria. We then examine the user's regret under equilibrium over a finite horizon. This regret is measured using an undiscounted sum of realized rewards, reflecting how much reward the user forfeits due to being myopic (i.e., having $\gammah < 1$). We show that a user incurs a constant regret only if they are sufficiently foresighted, meaning $\gammah$ exceeds a threshold specific to their type.

Starting from \cref{thm:alg_best_response}, we observe that the algorithm uses a linear classifier to make its decisions. Here, a user of type~$\theta$ contributes to the classifier's margin by an amount of~$h_\theta \lambda_\theta$. Therefore, this user can influence the classifier to recommend item~$a$ (or item~$b$) by choosing a larger (or smaller) value of~$h_\theta$. For instance, a type~$1$ user has $h_\theta \propto f_\theta(a) - \alpha_\theta(a)/\alpha_\theta(b)$. This user can push the classifier's value toward a negative value, favoring item~$b$, by refusing to engage, i.e., setting $f_\theta(a)$ below $\alpha_\theta(a)/\alpha_\theta(b)$.

A type~$\theta$ user’s ability to steer the algorithm toward her preferred item depends on the strategies of other users---specifically, on the \emph{classifier margin from the perspective of type~$\theta$}:
\begin{equation}
    \label{eq:def_margin}
    m_\theta \coloneqq \sum_{\theta' \in \Theta \setminus \{\theta\}} h_{\theta'} \lambda_{\theta'}
    \,.
\end{equation}
Intuitively, a larger margin implies that a type~$1$ user will have greater difficulty steering the algorithm. We now formalize this intuition by fully characterizing the equilibria under algorithmic entry:

\begin{theoremEnd}[restate]{thm}[Equilibrium under algorithmic entry]
\label{thm:user_best_response_ae}
Let $m_\theta^\AE$ be the margin of the algorithm's classifier from the perspective of user type~$\theta$ when all other user types follow the equilibrium strategy under algorithmic entry. Define the \emph{steerable sets} for type~$1$ and~$2$ users as follows:
\begin{align*}
    &\theta \in \Theta_1: \; F_\theta \coloneqq \Big\{x \in \big[0, \frac{\alpha_\theta(a)}{\alpha_\theta(b)}\big) \mid \frac{\lambda_\theta}{\alpha_\theta(b)} - m_\theta^\AE - x \, \big(\frac{\lambda_\theta}{\alpha_\theta(a)} - \gammaa \, m_\theta^\AE\big) \ge 0\Big\} \\
    &\theta \in \Theta_2: \; F_\theta \coloneqq \Big\{x \in [0, 1] \mid \frac{\lambda_\theta}{\alpha_\theta(a)} + m_\theta^\AE - x \, \big(\frac{\lambda_\theta}{\alpha_\theta(b)} + \gammaa \, m_\theta^\AE\big) \ge 0\Big\} 
    \,.
\end{align*}
Let $s^*_\theta$ and $(-s^*_\theta)$ be the high and low reward contents for type~$\theta$. The user's strategy at equilibrium is
\begin{align*}
    f_\theta^\AE(s^*_\theta) = 1 \,, \quad
    f_\theta^\AE(-s^*_\theta) = \begin{cases}
        \text{any value in } F_\theta \,, & F_\theta \neq \emptyset \,, \\
        0 \,, & F_\theta = \emptyset \,, \gammah > \frac{r_\theta(-s^*_\theta)}{r_\theta(s^*_\theta)} \,, \\
        1 \,, & F_\theta = \emptyset \,, \gammah < \frac{r_\theta(-s^*_\theta)}{r_\theta(s^*_\theta)} \,, \\
        \text{any value in } [0, 1] \,, & F_\theta = \emptyset \,, \gammah = \frac{r_\theta(-s^*_\theta)}{r_\theta(s^*_\theta)} \,.
    \end{cases}
\end{align*}
\end{theoremEnd}
\begin{proofEnd}
    We use a similar notation as in the proof of \cref{thm:alg_best_response}. For improved readability, we drop~$\vf$ from $Q_\theta(\vlambda, s; \vf, \vf)$ and $\Qa(\vlambda, s; \vf)$. With this notation, the Bellman update for user type~$\theta$ in \cref{eq:Qh_Bellman} can be written as
    \begin{align*}
        Q_\theta(\vlambda, s) &= f_\theta(s) \, r_\theta(s) \\
        &+ \gammah f_\theta(s) \, Q_\theta\Big(\vlambda \hcirc \vf(s), \argmax_{s'} \Qa\big(\vlambda \hcirc \vf(s), s'\big)\Big) \\
        &+ \gammah (1 - f_\theta(s)) \, Q_\theta\Big(\vlambda \hcirc (1-\vf(s)), \argmax_{s'} \Qa\big(\vlambda \hcirc (1-\vf(s)), s'\big)\Big) 
        \,.
    \end{align*}
    Since the user's entry and subsequent interactions occur under the algorithm's best response, we only need to solve the above for $s \in \argmax_{s'} \Qa(\vlambda, s')$. When $\Qa(\vlambda, s) \ge \Qa(\vlambda, -s)$, \cref{lem:q_to_h,lem:f_favors} imply
    \begin{align*}
        s &\in \argmax_{s'} \Qa\big(\vlambda \hcirc \vf(s), s'\big) \,, \\
        (-s) &\in \argmax_{s'} \Qa\big(\vlambda \hcirc (1 - \vf(s)), s'\big) \,.
    \end{align*}
    Plugging these into the Bellman update, we obtain
    \begin{equation*}
        Q_\theta(\vlambda, s) = f_\theta(s) \, r_\theta(s)
        + \gammah f_\theta(s) \, Q_\theta\big(\vlambda \hcirc \vf(s), s\big)
        + \gammah (1 - f_\theta(s)) \, Q_\theta\big(\vlambda \hcirc (1-\vf(s)), -s\big) 
        \,.
    \end{equation*}
    Note that this equation has no dependence on~$\vlambda$, so, we can drop it from the notation. Using \cref{assump:limit_f}, we can write the above update separately for $s=s^*_\theta$ and $s = (-s^*_\theta)$:
    \begin{align*}
        Q_\theta(s^*_\theta) &= r_\theta(s^*_\theta) + \gammah \, Q_\theta(s^*_\theta) \,, \\
        Q_\theta(-s^*_\theta) &= f_\theta(-s^*_\theta) \, r_\theta(-s^*_\theta) + \gammah f_\theta(-s^*_\theta) \, Q_\theta(-s^*_\theta)
        + \gammah (1 - f_\theta(-s^*_\theta)) \, Q_\theta(s^*_\theta) \,.
    \end{align*}
    Solving these equations, we obtain
    \begin{align*}
        Q_\theta(s^*_\theta) &= \frac{r_\theta(s^*_\theta)}{1 - \gammah} \,, \\
        Q_\theta(-s^*_\theta) &= \frac{f_\theta(-s^*_\theta) \, r_\theta(-s^*_\theta)}{1 - \gammah f_\theta(-s^*_\theta)} + \frac{\gammah (1 - f_\theta(-s^*_\theta)) \, r_\theta(s^*_\theta)}{(1 - \gammah f_\theta(-s^*_\theta)) \, (1 - \gammah)} \\
        &= \frac{r_\theta(s^*_\theta)}{1 - \gammah} - \frac{r_\theta(s^*_\theta) - f_\theta(-s^*_\theta) \, r_\theta(-s^*_\theta)}{1 - \gammah f_\theta(-s^*_\theta)}
        \,.
    \end{align*}
    Starting from a prior~$\vlambda$ over user types, the user's value is
    \begin{equation*}
        V_\theta(\vlambda) \coloneqq Q_\theta\big(\argmax_s \Qa(\vlambda, s)\big)
        \,.
    \end{equation*}

    Given $V_\theta$, we now explore the user's best strategy that maximizes~$V_\theta$, leading to the equilibrium notion defined in \cref{eq:ae_equil}. Note that $Q_\theta(s^*_\theta) \ge Q_\theta(-s^*_\theta)$. Therefore, the optimal strategy is to select $f_\theta(-s^*_\theta)$ such that $s^*_\theta \in \argmax_s \Qa(\vlambda, s)$. In the equilibrium, \cref{thm:alg_best_response} implies that this is only possible when
    \begin{equation}
        \label{eq:_proof_possible_switch}
        h_\theta(s^*_\theta) \lambda_\theta = \frac{\lambda_\theta}{1 - \gammaa f_\theta(-s^*_\theta)} \Big[\frac{1}{\alpha_\theta(s^*_\theta)} - \frac{f_\theta(-s^*_\theta)}{\alpha_\theta(-s^*_\theta)}\Big]
        \ge -\ip{\vlambda_{-\theta}}{\vh_{-\theta}^\AE(s^*_\theta)}
        \,.
    \end{equation}
    Here, we generalized $\vh$ in \cref{eq:def_h} by defining $\vh(s) = \One\{s = a\}\cdot\vh - \One\{s = b\}\cdot\vh$. \cref{eq:_proof_possible_switch} is a linear constraint over~$f_\theta(-s^*_\theta)$:
    \begin{equation*}
        \frac{\lambda_\theta}{\alpha_\theta(s^*_\theta)} + \ip{\vlambda_{-\theta}}{\vh_{-\theta}^\AE(s^*_\theta)} 
        - f_\theta(-s^*_\theta) \, \Big(\frac{\lambda_\theta}{\alpha_\theta(-s^*_\theta)} + \gammaa \ip{\vlambda_{-\theta}}{\vh_{-\theta}^\AE(s^*_\theta)}\Big)
        \ge 0
        \,.
    \end{equation*}
    Any $f_\theta(-s^*_\theta)$ that meets this condition is the user's best response. If the condition does not hold, then $V_\theta(\vlambda) = Q_\theta(-s^*_\theta)$. In this case, the user's best response is 
    \begin{equation*}
        f_\theta^\AE(-s^*_\theta) = \begin{cases}
            0 \,, & \gammah > \frac{r_\theta(-s^*_\theta)}{r_\theta(s^*_\theta)} \,, \\
            1 \,, & \gammah < \frac{r_\theta(-s^*_\theta)}{r_\theta(s^*_\theta)} \,, \\
            [0, 1] & \text{o.w.}
        \end{cases}
    \end{equation*}
    Using specific values for type~$1$ and type~$2$ users above will complete the proof.
\end{proofEnd}
This theorem shows that for each user type~$\theta$, there exists a steerable set~$F_\theta$, defined by a linear constraint on the user’s strategy. If nonempty, any strategy in~$F_\theta$ results in an equilibrium. Notably, a larger margin~$m_\theta^\AE$ shrinks~$F_\theta$ for type~$1$ users and expands it for type~$2$.

The definition of~$F_\theta$ also highlights the role of $\gammaa$, which captures the algorithm’s foresight. Notably, $\gammaa$ appears only in the product $\gammaa \, m_\theta^\AE$, so its effect depends on both the sign and magnitude of the margin. For instance, when $m_\theta^\AE > 0$, increasing~$\gammaa$ expands the steerable set for type~$1$ users: as the algorithm becomes more foresighted, even slight disengagement from type~$1$ users can effectively influence its behavior. Moreover, the steerable set for type~1 users has the following structure:
\begin{theoremEnd}[restate]{corollary}
\label{cor:steerable_set_nonempty}
The steerable set for user type~$\theta \in \Theta_1$ is nonempty if and only if $\lambda_\theta \ge \alpha_\theta(b) \, m_\theta^\AE$. Moreover, when nonempty, $F_\theta=[0, c)$ for some~$c$. 
\end{theoremEnd}
\begin{proofEnd}
    Suppose $\lambda_\theta <  \gammaa \, \alpha_\theta(a) \, m_\theta^\AE$. In this case, 
    \begin{equation*}
        \frac{\lambda_\theta}{\alpha_\theta(b)} - m_\theta^\AE - x \, \big(\frac{\lambda_\theta}{\alpha_\theta(a)} - \gammaa \, m_\theta^\AE\big) 
        < -\lambda_\theta \big(\frac{1}{\alpha_\theta(a)} - \frac{1}{\alpha_\theta(b)}\big) - m_\theta^\AE \, (1 - \gammaa) \le 0
        \,,
    \end{equation*}
    which means $F_\theta = \emptyset$. Therefore,
    \begin{equation*}
        F_\theta \neq \emptyset \implies \lambda_\theta \ge  \gammaa \, \alpha_\theta(a) \, m_\theta^\AE
        \,.
    \end{equation*}
    Now suppose $x \in F_\theta$. Then the above implies any~$x' \le x$ is also in~$F_\theta$. Therefore, when $F_\theta \neq \emptyset$, it covers $[0, c)$ for some~$c$. Particularly, when $F_\theta \neq \emptyset$, we should have~$0 \in F_\theta$ which implies
    \begin{equation*}
        F_\theta \neq \emptyset \implies \lambda_\theta \ge \alpha_\theta(b) \, m_\theta^\AE
        \,.
    \end{equation*}
    Now suppose $\lambda_\theta \ge \alpha_\theta(b) \, m_\theta^\AE$. Then $0 \in F_\theta$, so 
    \begin{equation*}
        \lambda_\theta \ge \alpha_\theta(b) \, m_\theta^\AE \implies F_\theta \neq \emptyset
        \,.
    \end{equation*}
    Thus, we can conclude
    \begin{equation*}
        \lambda_\theta \ge \alpha_\theta(b) \, m_\theta^\AE \iff F_\theta \neq \emptyset
        \,.
    \end{equation*}
\end{proofEnd}
As an immediate result of this corollary, we have the following observation:
\begin{corollary}
\label{cor:alg_best_response_empty_steerable_ae}
For a user of type~$\theta \in \Theta_1$, in any equilibrium under algorithmic entry where $m_\theta^\AE > \lambda_\theta/\alpha_\theta(b)$ and $\gammah \neq r_\theta(a)/r_\theta(b)$, the user's strategy is
\begin{equation*}
    f_\theta^\AE(b) = 1 \,, \quad f_\theta^\AE(a) = \One\Big\{\gammah < \frac{r_\theta(a)}{r_\theta(b)} \Big\} \,.
\end{equation*}
\end{corollary}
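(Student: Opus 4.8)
The plan is to obtain this directly from \cref{cor:steerable_set_nonempty} together with the case analysis already carried out in \cref{thm:user_best_response_ae}, so no new computation is required beyond identifying which branch we land in. First I would record that for a type~$1$ user ($\theta \in \Theta_1$) the definitions of the special case (\cref{tab:special_case}) give $r_\theta(b) > r_\theta(a)$, so the high-reward content is $s_\theta^* = b$ and the low-reward content is $(-s_\theta^*) = a$. In particular the reward ratio appearing in \cref{thm:user_best_response_ae} becomes $r_\theta(-s_\theta^*)/r_\theta(s_\theta^*) = r_\theta(a)/r_\theta(b)$, matching the ratio in the statement we want to prove.

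Next I would translate the hypothesis $m_\theta^\AE > \lambda_\theta/\alpha_\theta(b)$ into a statement about the steerable set. Since $\alpha_\theta(b) > 0$, this inequality is equivalent to $\lambda_\theta < \alpha_\theta(b)\, m_\theta^\AE$, so \cref{cor:steerable_set_nonempty} immediately gives $F_\theta = \emptyset$. We are therefore precisely in the empty-steerable-set regime treated by the second, third, and fourth cases of \cref{thm:user_best_response_ae}.

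Finally I would read off the equilibrium strategy. \cref{thm:user_best_response_ae} always gives $f_\theta^\AE(s_\theta^*) = 1$, i.e.\ $f_\theta^\AE(b) = 1$. For the low-reward content, the $F_\theta = \emptyset$ branches of the theorem state $f_\theta^\AE(-s_\theta^*) = 0$ when $\gammah > r_\theta(-s_\theta^*)/r_\theta(s_\theta^*)$ and $f_\theta^\AE(-s_\theta^*) = 1$ when $\gammah < r_\theta(-s_\theta^*)/r_\theta(s_\theta^*)$; the indeterminate boundary case $\gammah = r_\theta(a)/r_\theta(b)$ is excluded by the hypothesis $\gammah \neq r_\theta(a)/r_\theta(b)$. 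Substituting $(-s_\theta^*) = a$ and the ratio computed above, these two branches collapse into the single expression $f_\theta^\AE(a) = \One\{\gammah < r_\theta(a)/r_\theta(b)\}$, which is exactly the claimed strategy.

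Because every step is a direct invocation of an already-established result, there is essentially no genuine obstacle here; the only points demanding care are correctly matching the generic labels $s_\theta^*$ and $(-s_\theta^*)$ to the concrete items $b$ and $a$ for a type~$1$ user, checking the sign/direction when rewriting $m_\theta^\AE > \lambda_\theta/\alpha_\theta(b)$ as $\lambda_\theta < \alpha_\theta(b)\, m_\theta^\AE$, and confirming that the hypothesis $\gammah \neq r_\theta(a)/r_\theta(b)$ is precisely what removes the one indeterminate case in \cref{thm:user_best_response_ae}.
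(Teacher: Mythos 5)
Your proposal is correct and matches the paper's intent exactly: the paper presents this corollary as an immediate consequence of \cref{cor:steerable_set_nonempty} (whose contrapositive turns the hypothesis $m_\theta^\AE > \lambda_\theta/\alpha_\theta(b)$ into $F_\theta = \emptyset$) combined with the empty-steerable-set branches of \cref{thm:user_best_response_ae}, specialized to $s_\theta^* = b$ and $(-s_\theta^*) = a$ for type~$1$ users. Nothing is missing.
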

This observation extends beyond algorithmic entry: in \cref{thm:user_best_response_re}, deferred to the appendix for conciseness, we show that the same user strategy also holds at equilibrium under random entry.

When the steerable set is empty, these results imply that---regardless of whether entry is algorithmic or random---a strategic user will disengage from undesired content only if she is sufficiently foresighted. Let $\tauh \coloneqq 1/(1 - \gammah)$ be the user’s \emph{effective horizon}. Type~$1$ users disengage only if
\begin{equation}
    \label{eq:min_eff_horizon}
    \tauh \coloneqq \frac{1}{1 - \gammah} > \frac{r_\theta(b)}{r_\theta(b) - r_\theta(a)}
    \,.
\end{equation}
If the effective horizon is short, the user will fully engage with the tempting content, aligning with the \emph{algorithm's interest}. This can lead to significant regret for the user, as we shall discuss next.

\paragraph{Regret under equilibrium strategies.}
While we used $\gammah$-discounted rewards to model the user's limited foresight, comparing strategies requires an undiscounted sum of rewards. Let $V^T_\theta(\vlambda; \vf)$ denote the expected total reward over $T$~steps for a user of type~$\theta$, given a strategy profile~$\vf$, a type distribution~$\vlambda$, and an algorithm that best responds:
\begin{equation*}
    V^T_\theta(\vf) \coloneqq \E_{H_T \sim (f_\theta, \pi^*)} \Big[ \sum_{(s_t, y_t) \in H_T} \One\{y_t > 0\} \, r_\theta(s_t) \Big]
    \,.
\end{equation*}
Here, $H_T \sim (f_\theta, \pi^*)$ denotes the distribution of histories of length~$T$ when the user leads by the strategy~$f_\theta$ and the algorithm best responds. Note that the first item in the history may be either randomly chosen or optimally selected by the algorithm. We dropped~$\vlambda$ from the notation for brevity.

Let $\vf^{\xE, \gammah\to 1}$ be the best user strategy profile under x-entry (either random or algorithmic) when~$\gammah \to 1$. We define the regret of user type~$\theta$ after $T$~steps as
\begin{equation*}
    \Regret_\theta^T(\vf^\xE) \coloneqq V^T_\theta(\vf^{\xE, \gammah\to 1}) - V^T_\theta(\vf^\xE)
    \,.
\end{equation*}
We can see that a user incurs constant regret---spending only a fixed number of time steps aligning the algorithm with her preferred content---if and only if her effective horizon is sufficiently large:
\begin{theoremEnd}[restate]{corollary}
\label{cor:constant_regret}
A user of type~$\theta \in \Theta_1$ has constant regret in equilibrium if and only if \cref{eq:min_eff_horizon} holds.
\end{theoremEnd}
\begin{proofEnd}
    First, there always exists an equilibrium where the conditions of \cref{cor:alg_best_response_empty_steerable_ae} and \cref{thm:user_best_response_re} are satisfied. This occurs, for example, when type~$1$ users with inconsistent actions and interests form a small part of the population. In such cases, the optimal strategy for a type~$1$ user is either $f_\theta(a) = 0$ or $f_\theta(a) = 1$. When the user fully engages, the algorithm cannot distinguish her from users with aligned interests, leading it to continue recommending type~$b$ content. In contrast, when the user fully disengages, the algorithm will recommend type~$b$ content at most once.
\end{proofEnd}
\section{Costly signaling reduces the burden of alignment}

In this section, we show that if the platform enables costly signaling---such as allowing the user to click a small button to show disinterest---then system~$2$ can incur this additional cost to better signal its type. The possibility of incurring a cost effectively separates type communication from content consumption and can ease the user's engagement decisions.

We begin by formalizing costly signaling and redefining the user’s strategy and equilibrium. As before, we first derive the algorithm’s best response, then characterize the user’s. Finally, we quantify how much introducing costly signaling can alleviate the user's burden to achieve constant regret.

\subsection{Strategies, values, and equilibrium}
We assume that the user's system~$2$ can choose to incur a fixed cost~$c$, and this effort is observable by the algorithm. A user of type~$\theta$ adopts a strategy that, when presented with item~$s$, involves both the probability of engagement, denoted by~$f_\theta(s)$, and the probability of incurring the cost, denoted by~$u_\theta(s)$. We denote the strategy profile of the users by $(\vf, \vu)$, where $\vu \coloneqq (u_\theta)_{\theta\in\Theta}$.

The algorithm observes a history of both engagements and costs: $\hatH = (s_1, \haty_1, \hatu_1, s_2, \haty_2, \hatu_2, \cdots)$, where $\hatu$ is a binary indicator of whether the cost was incurred. As before, the algorithm maintains a posterior distribution over user types given~$\hatH$ to best respond, which yields the Bellman update
{\small
\begin{equation}
\label{eq:Qa_Bellman_signaling}
    \Qa(\vlambda, s; \vf, \vu) = 
    \E_{\theta \sim \vlambda}\Big[ \frac{f_\theta(s)}{\alpha_\theta(s)} +
    \gammaa \, \E_{\substack{\haty \sim \Ber(f_\theta(s)) \\ \hatu \sim \Ber(u_\theta(s))}}\Big[
    \max_{s'} \Qa\Big(\big[\vlambda \mid (s, \haty, \hatu); \vf, \vu\big], s'; \vf, \vu\Big) 
    \Big]\Big]
    \,.
\end{equation}
}Let $\vlambda$ denote the current algorithm's posterior over user types. Suppose the algorithm uses the strategy profile~$(\vf', \vu')$ to calculate this posterior, where $\vf'$ and $\vu'$ are not necessarily the same as~$\vf$ and~$\vu$. We have the following Bellman update for the user type~$\theta$'s Q-value:
{\small
\begin{align}
\label{eq:Qh_Bellman_signaling}
    &Q_\theta(\vlambda, s; \, \vf, \vu, \vf', \vu') =
    f_\theta(s) \, r_\theta(s) - u_\theta(s) \, c \\
    &+ \gammah \, \E_{\substack{\haty \sim \Ber(f_\theta(s)) \\ \hatu \sim \Ber(u_\theta(s))}} 
    Q_\theta\Big(\big[\vlambda \mid (s, \haty, \hatu); \vf', \vu'\big], \argmax_{s'} \Qa\big(\big[\vlambda \mid (s, \haty, \hatu); \vf', \vu'\big], s'; \vf', \vu'\big); \, \vf, \vu, \vf', \vu' \Big)
    . \nonumber
\end{align}
}We study a multi-leader, single-follower Stackelberg equilibrium where users (leaders) commit to a strategy~$(\vf, \vu)$, and the algorithm (follower) best responds to~$(\vf' = \vf, \vu' = \vu)$. For brevity, we omit~$(\vf', \vu')$ from $Q_\theta$ notation and focus on the case of algorithmic entry. At equilibrium, we have
{\small
\begin{equation}
    \label{eq:ae_equil_signaling}
    (f_\theta^\AE, u_\theta^\AE) \in \argmax_{f_\theta, u_\theta} Q_\theta\Big(\vlambda, \argmax_s \Qa\big(\vlambda, s; (f_\theta, \vf_{-\theta}^\AE), (u_\theta, \vu_{-\theta}^\AE)\big); (f_\theta, \vf_{-\theta}^\AE), (u_\theta, \vu_{-\theta}^\AE)\Big)
    \,.
\end{equation}
}We next characterize equilibria of this form and show they impose a lower burden on the user.

\subsection{Characterizing equilibria: Algorithm's best response}
We characterize the algorithm's optimal strategy, focusing on the special case outlined in \cref{sec:sepcial_case}. We show that, similar to the case without signaling, the algorithm's Q-value is piecewise linear in the prior~$\vlambda$ over user types, and the algorithm's strategy functions as a linear classifier operating on~$\vlambda$.
To solve the Bellman update in \cref{eq:Qa_Bellman_signaling}, we impose an additional restriction on~$(\vf, \vu)$ beyond \cref{assump:limit_f} to rule out pathological cases:
\begin{assumption}
\label{assump:limit_f_signaling}
Let $s_\theta^* \in \argmax_s r_\theta(s)$ be the highest rewarding content for user type~$\theta$. We assume that no user of type~$\theta$ pays a cost when recommended with $s_\theta^*$, i.e., $u_\theta(s_\theta^*) = 0 \,, \forall\theta \in \Theta$,
and no user pays a cost for content~$s$ if that does not discourages the algorithm from recommending~$s$:
\begin{equation*}
    u_\theta(s) > 0 \implies f_\theta(s) \in \big[0, \frac{\alpha_\theta(s)}{\alpha_\theta(s_\theta^*)}\big) \,, \quad\quad \forall\theta \in \Theta, \forall s \in S
    \,.
\end{equation*}
\end{assumption}
Given these restrictions over the user strategy profile, we now present the algorithm's best response:
\begin{theoremEnd}[restate]{thm}[Algorithm's best response with signaling]
\label{thm:alg_best_response_signaling}
Given that the algorithm has a posterior~$\vlambda$ over $\Theta$, it will best respond by recommending item~$a$ if and only if $\sum_{\theta \in \Theta} h_\theta \lambda_\theta \ge 0$, where
\begin{equation}
    \label{eq:def_h_signaling}
    h_\theta = \frac{1 - \gammaa}{\big(1 - \gammaa f_\theta(a)(1 - u_\theta(a))\big) \, \big(1 - \gammaa f_\theta(b)(1 - u_\theta(b))\big)} \Big[\frac{f_\theta(a)}{\alpha_\theta(a)} - \frac{f_\theta(b)}{\alpha_\theta(b)} \Big]
    \,.
\end{equation}
\end{theoremEnd}
\begin{proofEnd}
    We follow a similar notation as in the proof of \cref{thm:alg_best_response}. Using this notation, we can rewrite the algorithm's Bellman update from \cref{eq:Qa_Bellman_signaling} as follows. First, the posterior $[\vlambda \mid (s, \haty, \hatu)]$ simplifies to the following form:
    \begin{equation*}
        [\vlambda \mid (s, \haty, \hatu)] = \begin{cases}
            \vlambda \hcirc [\vu(s) \circ \vf(s)] \,, & \hatu = 1, \haty = 1 \,, \\
            \vlambda \hcirc [\vu(s) \circ (1 - \vf(s))] \,, & \hatu = 1, \haty = 0 \,, \\
            \vlambda \hcirc [(1 - \vu(s)) \circ \vf(s)] \,, & \hatu = 0, \haty = 1 \,, \\
            \vlambda \hcirc [(1 - \vu(s)) \circ (1 - \vf(s))] \,, & \hatu = 0, \haty = 0 \,.
        \end{cases}
    \end{equation*}
    We can also express the expected immediate reward $\E_\theta[f_\theta(s)/\alpha_\theta(s)]$ as $\ip{\vlambda}{\vf(s)/\valpha(s)}$. Using this, the Bellman update for~$\Qa$ becomes
    \begin{equation}
    \begin{aligned}
        \label{eq:_proof_Qa_Bellman_signaling}
        \Qa(\vlambda, s) &= \ip{\vlambda}{\vf(s)/\valpha(s)} \\
        &+ \gammaa \ip{\vlambda}{\vu \circ \vf(s)} \max_{s'} \Qa\big(\vlambda \hcirc [\vu(s) \circ \vf(s)], s'\big) \\
        &+ \gammaa \ip{\vlambda}{\vu \circ (1-\vf(s))} \max_{s'} \Qa\big(\vlambda \hcirc [\vu(s) \circ (1-\vf(s))], s'\big) \\
        &+ \gammaa \ip{\vlambda}{(1-\vu) \circ \vf(s)} \max_{s'} \Qa\big(\vlambda \hcirc [(1-\vu(s)) \circ \vf(s)], s'\big) \\
        &+ \gammaa \ip{\vlambda}{(1-\vu) \circ (1-\vf(s))} \max_{s'} \Qa\big(\vlambda \hcirc [(1-\vu(s)) \circ (1-\vf(s))], s'\big)
        \,.
    \end{aligned}
    \end{equation}
    We prove that the following Q-function solves the above:
    \begin{equation}
        \label{eq:_proof_Qa_signaling}
        \Qa(\vlambda, s) = \ip{\vlambda}{\vq(s)} + \gammaa \, \max \Big\{
        \ip{\vlambda}{(\vq(-s) - \vq(s)) \circ (1 - \vu(s)) \circ \vf(s)}, 
        0 
        \Big\}
        \,,
    \end{equation}
    where
    \begin{equation}
        \label{eq:_proof_def_q_signaling}
        \vq(s) \coloneqq \frac{\vf(s)}{1 - \gammaa (1 - \vu(s)) \circ \vf(s)}  \circ \frac{1}{\valpha(s)} + 
        \gammaa \frac{\vf(-s) \circ \big(1 - (1 - \vu(s)) \circ \vf(s)\big)}{(1 - \gammaa (1 - \vu(s)) \circ \vf(s)) \circ (1 - \gammaa \vf(-s))} \circ \frac{1}{\valpha(-s)}
        \,.
    \end{equation}
    Using $\vu(s) \circ \vu(-s) = 0$ and $\vu(s) \circ (1 - \vf(-s)) = 0$ implied by \cref{assump:limit_f_signaling}, one can verify that $\vh$ in \cref{eq:def_h_signaling} can be expressed as $\vq(a) - \vq(b)$. Before proving \cref{eq:_proof_Qa_signaling}, we first show that it implies $\vh \coloneqq \vq(a) - \vq(b)$ serves as the linear classifier that determines the algorithm's policy:
    \begin{theoremEnd}{lemma}
    \label{lem:q_to_h_signaling}
    $\Qa(\vlambda, s) \ge \Qa(\vlambda, -s) \iff \ip{\vlambda}{\vq(s) - \vq(-s)} \ge 0
    \,.$
    \end{theoremEnd}
    \begin{proofEnd}
        Suppose $\ip{\vlambda}{\vq(s)} \ge \ip{\vlambda}{\vq(-s)}$ for some~$s$. \cref{lem:f_favors,lem:1mu_favors} imply
        \begin{align*}
            \ip{\vlambda}{(\vq(s) - \vq(-s)) \circ (1 - \vu(-s)) \circ \vf(-s)} 
            &= -\ip{\vlambda \circ (1 - \vu(-s)) \circ \vf(-s)}{\vq(-s) - \vq(s)} \\ 
            &\le -\ip{\vlambda}{\vq(-s) - \vq(s)}
            = \ip{\vlambda}{\vq(s) - \vq(-s)}
            \,.
        \end{align*}
        Plugging this into $\Qa(\vlambda, -s)$, as defined in \cref{eq:_proof_Qa_signaling}, yields
        \begin{align*}
            \Qa(\vlambda, -s) &\le \ip{\vlambda}{\vq(-s)} + \gammaa \ip{\vlambda}{\vq(s) - \vq(-s)} \\
            &\le \ip{\vlambda}{\vq(-s)} + \gammaa \ip{\vlambda}{\vq(s) - \vq(-s)} + (1 - \gammaa) \ip{\vlambda}{\vq(s) - \vq(-s)} \\
            &= \ip{\vlambda}{\vq(s)} 
            \le \Qa(\vlambda, s)
            \,.
        \end{align*}
        This completes the proof.
    \end{proofEnd}
    The proof of this lemma relies on \cref{lem:f_favors} and the following lemma:
    \begin{theoremEnd}{lemma}
    \label{lem:1mu_favors}
    $\ip{\vlambda \circ (1 - \vu(s))}{\vq(s) - \vq(-s)} \ge \ip{\vlambda}{\vq(s) - \vq(-s)}
    \,.$
    \end{theoremEnd}
    \begin{proofEnd}
        We first find a simplified expression for $\vq(s) - \vq(-s)$. Using the definition in \cref{eq:_proof_def_q_signaling} and $\vu(s) \circ \vu(-s) = 0$ and $\vu(s) \circ (1 - \vf(-s)) = 0$ as implied by \cref{assump:limit_f_signaling}, a straightforward calculation gives
        \begin{equation}
            \label{eq:_proof_dif_q_signaling}
            \vq(s) - \vq(-s) = \frac{1 - \gammaa}{(1 - \gammaa (1 - \vu(s)) \circ \vf(s)) \circ (1 - \gammaa (1 - \vu(-s)) \circ \vf(-s))}
            \circ \big[\vf(s) \circ \frac{1}{\valpha(s)} - \vf(-s) \circ \frac{1}{\valpha(-s)} \big]
            \,.
        \end{equation}

        For a content~$s$, under \cref{assump:limit_f_signaling}, whenever $u_\theta(s) > 0$, we have $f_\theta(s) < \alpha_\theta(s)/\alpha_\theta(-s)$. Therefore, we can divide~$\Theta$ into two groups where in one group $u_\theta(s) = 0$ and in the other group $u_\theta(s) > 0, f_\theta(s) < \alpha_\theta(s)/\alpha_\theta(-s)$:
        \begin{align*}
            \ip{\vlambda \circ (1 - \vu(s))}{\vq(s) - \vq(-s)} &= \sum_{\theta \in \Theta} \lambda_\theta (1 - u_\theta(s)) (q_\theta(s) - q_\theta(-s)) \\
            &= \sum_{\theta: u_\theta(s) > 0} \lambda_\theta (1 - u_\theta(s)) (q_\theta(s) - q_\theta(-s))
            + \sum_{\theta: u_\theta(s) = 0} \lambda_\theta (q_\theta(s) - q_\theta(-s)) 
            \,.
        \end{align*}
        For the first group corresponding to the first sum above, we know from \cref{assump:limit_f_signaling} that $s = (-s^*_\theta)$. In this case, \cref{assump:limit_f} implies $f_\theta(-s) = 1$. Plugging this into \cref{eq:_proof_dif_q}, we have
        \begin{equation*}
            \sign\big(q_\theta(s) - q_\theta(-s)\big) = \sign\big(f_\theta(s) \frac{1}{\alpha_\theta(s)} - \frac{1}{\alpha_\theta(-s)}\big) = -1
            \,.
        \end{equation*}
        Therefore, we can conclude
        \begin{equation*}
            \ip{\vlambda \circ (1 - \vu(s))}{\vq(s) - \vq(-s)} \ge \ip{\vlambda}{\vq(s) - \vq(-s)}
            \,.
        \end{equation*}
    \end{proofEnd}
    This lemma also yields the following result that is useful in simplifying the fourth term of \cref{eq:_proof_Qa_Bellman_signaling}:
    \begin{theoremEnd}{lemma}
    \label{lem:max_f_signaling}
    $\max_{s'} \Qa(\vlambda, s') = \max_{s'} \ip{\vlambda}{\vq(s')} \,.$
    \end{theoremEnd}
    \begin{proofEnd}
        Suppose $\ip{\vlambda}{\vq(s)} \ge \ip{\vlambda}{\vq(-s)}$ for some~$s$. \cref{lem:f_favors,lem:1mu_favors} imply
        \begin{align*}
            \ip{\vlambda}{(\vq(s) - \vq(-s)) \circ (1 - \vu(s)) \circ \vf(s)} 
            &= \ip{\vlambda \circ (1 - \vu(s)) \circ \vf(s)}{\vq(s) - \vq(-s)} \\ 
            &\ge \ip{\vlambda}{\vq(s) - \vq(-s)}
            \ge 0
            \,.
        \end{align*}
        Plugging this into $\Qa(\vlambda, s)$, as defined in \cref{eq:_proof_Qa_signaling}, yields
        \begin{equation*}
            \Qa(\vlambda, s) = \ip{\vlambda}{\vq(s)}
            \,.
        \end{equation*}
    \end{proofEnd}
    Together \cref{lem:q_to_h,lem:f_favors,lem:max_f} give the following result that is useful in simplifying the first and second term of \cref{eq:_proof_Qa_Bellman}:
    \begin{theoremEnd}{lemma}
    \label{lem:max_u}
    If $\ip{\vlambda}{\vu(s) \circ \vg(s)} > 0$ for some~$\vg$, we have
    \begin{equation*}
        \max_{s'} \Qa\big(\vlambda \hcirc [\vu(s) \circ \vg(s)], s'\big) = \ip{\vlambda \hcirc [\vu(s) \circ \vg(s)]}{\vq(-s)} 
        \,.
    \end{equation*}
    \end{theoremEnd}
    \begin{proofEnd}
        Assuming $\ip{\vlambda}{\vu(s) \circ \vg(s)} > 0$, \cref{lem:1mu_favors} implies
        \begin{equation*}
            \ip{\vlambda \circ \vg(s) \circ \vu(s)}{\vq(s) - \vq(-s)} \le 0 \iff \ip{\vlambda \hcirc [\vu(s) \circ \vg(s)]}{\vq(s) - \vq(-s)} \le 0
            \,.
        \end{equation*}
        Then \cref{lem:q_to_h_signaling} implies
        \begin{equation*}
            \argmax_{s'} \Qa\big(\vlambda \hcirc [\vu(s) \circ \vg(s)], s'\big) = (-s)
            \,.
        \end{equation*}
        Finally, \cref{lem:max_f_signaling} implies
        \begin{equation*}
            \max_{s'} \Qa\big(\vlambda \hcirc [\vu(s) \circ \vg(s)], s'\big) = \ip{\vlambda \hcirc [\vu(s) \circ \vg(s)]}{\vq(-s)}
            \,.
        \end{equation*}
    \end{proofEnd}

    Using \cref{lem:max_u} for the second and third term, \cref{lem:max_f_signaling} for the fourth term, and \cref{lem:max_1mf} for the fifth term in the right-hand side of \cref{eq:_proof_Qa_Bellman_signaling}, we can simplify the Bellman update as
    \begin{align}
        &\ip{\vlambda}{\vf(s)/\valpha(s)} 
        + \gammaa \ip{\vlambda}{(1 - \vu(s)) \circ \vf(s)} \max_{s'} \ip{\vlambda \hcirc [(1 - \vu(s)) \circ \vf(s)]}{\vq(s')} \nonumber \\
        &+ \gammaa \ip{\vlambda \circ (1 - \vf(s) + \vu(s) \circ \vf(s))}{\vq(-s)} \nonumber \\
        \label{eq:_proof_rhs_signaling}
        &= \ip{\vlambda}{\vf(s)/\valpha(s)} 
        + \gammaa \max \Big\{ \ip{\vlambda \circ (1 - \vu(s)) \circ \vf(s)}{\vq(-s) - \vq(s)}, 0 \Big\} \nonumber \\
        &+ \gammaa \ip{\vlambda}{\vq(s) \circ (1 - \vu(s)) \circ \vf(s) + \vq(-s) \circ (1 - \vf(s) + \vu(s) \circ \vf(s))}
        \,.
    \end{align}
    Using $(1 - \vf(s))(1 - \vf(-s)) = 0$ from \cref{assump:limit_f}, we can further simplify the first and third (last) terms by
    \begin{align}
        &\frac{\vf(s)}{\valpha(s)} + \gammaa \vq(s) \circ (1 - \vu(s)) \circ \vf(s) + \gammaa \vq(-s) \circ (1 - \vf(s) + \vu(s) \circ \vf(s)) \nonumber \\
        &= \frac{1}{\valpha(s)} \circ \Big[\vf(s) + \gammaa\frac{(1 - \vu(s)) \circ \vf^2(s)}{1 - \gammaa (1 - \vu(s)) \circ \vf(s)}\Big] \nonumber \\
        \label{eq:_proof_simplify}
        &+ \gammaa \frac{1}{\valpha(-s)} \circ \Big[\gammaa\frac{(1 - \vu(s)) \circ \vf(s) \circ \vf(-s) \circ \big(1 - (1 - \vu(s)) \circ \vf(s)\big)}{(1 - \gammaa (1 - \vu(s)) \circ \vf(s)) \circ (1 - \gammaa \vf(-s))} 
        + \frac{\vf(-s) \circ \big(1 - (1 - \vu(s)) \circ \vf(s)\big)}{1 - \gammaa (1 - \vu(-s)) \circ \vf(-s)}\Big] 
        \,.
    \end{align}
    The following property implied by \cref{assump:limit_f,assump:limit_f_signaling} are useful to further simplify the above:
    \begin{equation*}
        \vu(-s) > 0 \implies 1 - (1 - \vu(s)) \circ \vf(s) = 0
    \end{equation*}
    Using this in \cref{eq:_proof_simplify}, we obtain
    \begin{align*}
        &\frac{1}{\valpha(s)} \circ \frac{\vf(s)}{1 - \gammaa (1 - \vu(s)) \circ \vf(s)} \\
        &+ \gammaa \frac{1}{\valpha(-s)} \circ \Big[\gammaa\frac{(1 - \vu(s)) \circ \vf(s) \circ \vf(-s) \circ \big(1 - (1 - \vu(s)) \circ \vf(s)\big)}{(1 - \gammaa (1 - \vu(s)) \circ \vf(s)) \circ (1 - \gammaa \vf(-s))} 
        + \frac{\vf(-s) \circ \big(1 - (1 - \vu(s)) \circ \vf(s)\big)}{1 - \gammaa \vf(-s)}\Big] \\
        &= \frac{1}{\valpha(s)} \circ \frac{\vf(s)}{1 - \gammaa (1 - \vu(s)) \circ \vf(s)} \\
        &+ \gammaa \frac{1}{\valpha(-s)} \circ \vf(-s) \circ \big(1 - (1 - \vu(s)) \circ \vf(s)\big) \frac{\gammaa (1 - \vu(s)) \circ \vf(s) + 1 - \gammaa (1 - \vu(s)) \circ \vf(s)}{(1 - \gammaa (1 - \vu(s)) \circ \vf(s)) \circ (1 - \gammaa \vf(-s))} \\
        &= \frac{1}{\valpha(s)} \circ \frac{\vf(s)}{1 - \gammaa (1 - \vu(s)) \circ \vf(s)} 
        + \gammaa \frac{1}{\valpha(-s)} \circ \frac{\vf(-s) \circ \big(1 - (1 - \vu(s)) \circ \vf(s)\big)}{(1 - \gammaa (1 - \vu(s)) \circ \vf(s)) \circ (1 - \gammaa \vf(-s))} \\
        &= \vq(s)
        \,.
    \end{align*}
    Plugging this into \cref{eq:_proof_rhs_signaling} gives $\Qa(\vlambda, s)$ as defined in \cref{eq:_proof_Qa_signaling}. Hence, the proposed $\Qa$ solves the Bellman update of \cref{eq:_proof_Qa_Bellman_signaling}. This completes the proof.
\end{proofEnd}
We next discuss the characterization of the equilibrium with signaling.


\subsection{Characterizing equilibria: User's best response}

Given the algorithm's best response, we now analyze the user's best, we now formally characterize the equilibria under algorithmic entry when users can incur an observable cost~$c$:

\begin{theoremEnd}[restate]{thm}[Equilibrium under algorithmic entry with signaling]
\label{thm:user_best_response_ae_signaling}
Let $m_\theta^\AE$ be the margin of the algorithm's classifier from the perspective of user type~$\theta$ when other user types follow the equilibrium strategy under algorithmic entry with signaling. Define the \emph{steerable sets} for type~$1$ and~$2$ users as
\begin{align*}
    &\theta \in \Theta_1: \; F_\theta \coloneqq \Big\{(x, y) \in \big[0, \frac{\alpha_\theta(a)}{\alpha_\theta(b)}\big) \times[0, 1] \mid \frac{\lambda_\theta}{\alpha_\theta(b)} - m_\theta^\AE - x \, \big(\frac{\lambda_\theta}{\alpha_\theta(a)} - \gammaa \, (1-y) \, m_\theta^\AE\big) \ge 0\Big\} \\
    &\theta \in \Theta_2: \; F_\theta \coloneqq \Big\{(x, y) \in [0, 1]^2 \mid \frac{\lambda_\theta}{\alpha_\theta(a)} + m_\theta^\AE - x \, \big(\frac{\lambda_\theta}{\alpha_\theta(b)} + \gammaa \, (1-y) \, m_\theta^\AE\big) \ge 0\Big\} 
    \,.
\end{align*}
Let $s^*_\theta$ and $(-s^*_\theta)$ be the high and low reward contents for type~$\theta$. Define the critical $\gammah$ value for type~$\theta$ as
\begin{equation*}
    \gammahc \coloneqq \frac{c + r_\theta(-s^*_\theta) \big(1 - \frac{\alpha_\theta(-s^*_\theta)}{\alpha_\theta(s^*_\theta)}\big)}{c + r_\theta(s^*_\theta) - r_\theta(-s^*_\theta)\frac{\alpha_\theta(-s^*_\theta)}{\alpha_\theta(s^*_\theta)}}
    \,.
\end{equation*}
Assume $\gammah \neq \gammahc$ and $c < \frac{\alpha_\theta(-s^*_\theta)}{\alpha_\theta(s^*_\theta)} \, r_\theta(-s^*_\theta)$.
The user’s strategy at equilibrium is
\begin{align*}
    &(f_\theta^\AE(s^*_\theta), u_\theta^\AE(s^*_\theta)) = (1, 0) \,, \\
    &(f_\theta^\AE(-s^*_\theta), u_\theta^\AE(-s^*_\theta)) = \begin{cases}
        \text{any value in } F_\theta \,, & F_\theta \neq \emptyset \,, \\
        (\frac{\alpha_\theta(-s^*_\theta)}{\alpha_\theta(s^*_\theta)}, 1) \,, & F_\theta = \emptyset \,, \gammah > \gammahc \,, \\
        (1, 0) \,, & F_\theta = \emptyset \,, \gammah < \gammahc \,.
    \end{cases}
\end{align*}
\end{theoremEnd}
\begin{proofEnd}
    We use a similar notation as in the proof of \cref{thm:user_best_response_ae}. For improved readability, we drop~$\vf$ and ~$\vu$ from $Q_\theta(\vlambda, s; \vf, \vu)$ and $\Qa(\vlambda, s; \vf, \vu)$. With this notation, the Bellman update for user type~$\theta$ in \cref{eq:Qh_Bellman_signaling} can be written as
    \begin{align*}
        &Q_\theta(\vlambda, s) = f_\theta(s) \, r_\theta(s) - u_\theta(s) \, c \\
        &+ \gammah u_\theta(s) \, f_\theta(s) \, Q_\theta\Big(\vlambda \hcirc [\vu(s) \circ \vf(s)], \argmax_{s'} \Qa\big(\vlambda \hcirc [\vu(s) \circ \vf(s)], s'\big)\Big) \\
        &+ \gammah u_\theta(s) \, (1-f_\theta(s)) \, Q_\theta\Big(\vlambda \hcirc [\vu(s) \circ (1-\vf(s))], \argmax_{s'} \Qa\big(\vlambda \hcirc [\vu(s) \circ (1-\vf(s))], s'\big)\Big) \\
        &+ \gammah (1-u_\theta(s)) \, f_\theta(s) \, Q_\theta\Big(\vlambda \hcirc [(1-\vu(s)) \circ \vf(s)], \argmax_{s'} \Qa\big(\vlambda \hcirc [(1-\vu(s)) \circ \vf(s)], s'\big)\Big) \\
        &+ \gammah (1-u_\theta(s)) \, (1-f_\theta(s)) \, Q_\theta\Big(\vlambda \hcirc [(1-\vu(s)) \circ (1-\vf(s))], \argmax_{s'} \Qa\big(\vlambda \hcirc [(1-\vu(s)) \circ (1-\vf(s))], s'\big)\Big)
        \,.
    \end{align*}
    Since the user's entry and subsequent interactions occur under the algorithm's best response, we only need to solve the above for $s \in \argmax_{s'} \Qa(\vlambda, s')$. When $\Qa(\vlambda, s) \ge \Qa(\vlambda, -s)$, \cref{lem:q_to_h_signaling,lem:f_favors,lem:1mu_favors} imply
    \begin{align*}
        (-s) &\in \argmax_{s'} \Qa\big(\vlambda \hcirc [\vu(s) \circ \vf(s)], s'\big) \,, \\
        (-s) &\in \argmax_{s'} \Qa\big(\vlambda \hcirc [\vu(s) \circ (1-\vf(s))], s'\big) \,, \\
        s &\in \argmax_{s'} \Qa\big(\vlambda \hcirc [(1-\vu(s)) \circ \vf(s)], s'\big) \,,\\
        (-s) &\in \argmax_{s'} \Qa\big(\vlambda \hcirc [(1-\vu(s)) \circ (1-\vf(s))], s'\big) \,.
    \end{align*}
    Plugging these into the Bellman update, we obtain
    \begin{align*}
        Q_\theta(\vlambda, s) &= f_\theta(s) \, r_\theta(s) - u_\theta(s) \, c \\ 
        &+ \gammah u_\theta(s) \, f_\theta(s) \, Q_\theta\big(\vlambda \hcirc [\vu(s) \circ \vf(s)], -s\big) \\
        &+ \gammah u_\theta(s) \, (1-f_\theta(s)) \, Q_\theta\big(\vlambda \hcirc [\vu(s) \circ (1-\vf(s))], -s\big) \\
        &+ \gammah (1-u_\theta(s)) \, f_\theta(s) \, Q_\theta\big(\vlambda \hcirc [(1-\vu(s)) \circ \vf(s)], s\big) \\
        &+ \gammah (1-u_\theta(s)) \, (1-f_\theta(s)) \, Q_\theta\big(\vlambda \hcirc [(1-\vu(s)) \circ (1-\vf(s))], -s\big)
        \,.
    \end{align*}
    Note that this equation has no dependence on~$\vlambda$, so, we can drop it from the notation and obtain the following simplified update rule:
    \begin{align*}
        Q_\theta(s) &= f_\theta(s) \, r_\theta(s) - u_\theta(s) \, c \\ 
        &+ \gammah \big(1 - (1 - u_\theta(s))\,f_\theta(s)\big) \, Q_\theta(-s) \\
        &+ \gammah (1-u_\theta(s)) \, f_\theta(s) \, Q_\theta(s) 
        \,.
    \end{align*}
    Using \cref{assump:limit_f,assump:limit_f_signaling}, we can write the above update separately for $s=s^*_\theta$ and $s = (-s^*_\theta)$:
    \begin{align*}
        Q_\theta(s^*_\theta) &= r_\theta(s^*_\theta) + \gammah \, Q_\theta(s^*_\theta) \,, \\
        Q_\theta(-s^*_\theta) &= f_\theta(-s^*_\theta) \, r_\theta(-s^*_\theta) - u_\theta(-s^*_\theta) \, c \\
        &+ \gammah \big(1 - (1 - u_\theta(-s^*_\theta)) \, f_\theta(-s^*_\theta) \big) \, Q_\theta(s^*_\theta)
        + \gammah (1 - u_\theta(-s^*_\theta)) \, f_\theta(-s^*_\theta) \, Q_\theta(-s^*_\theta) \,.
    \end{align*}
    Solving these equations, we obtain
    \begin{align*}
        Q_\theta(s^*_\theta) &= \frac{r_\theta(s^*_\theta)}{1 - \gammah} \,, \\
        Q_\theta(-s^*_\theta) &= \frac{f_\theta(-s^*_\theta) \, r_\theta(-s^*_\theta) - u_\theta(-s^*_\theta) \, c}{1 - \gammah (1 - u_\theta(-s^*_\theta)) f_\theta(-s^*_\theta)} + \frac{\gammah \big(1 - (1 - u_\theta(-s^*_\theta)) f_\theta(-s^*_\theta)\big) \, r_\theta(s^*_\theta)}{\big(1 - \gammah (1 - u_\theta(-s^*_\theta)) f_\theta(-s^*_\theta)\big) \, (1 - \gammah)} \\
        &= \frac{r_\theta(s^*_\theta)}{1 - \gammah} - \frac{r_\theta(s^*_\theta) - f_\theta(-s^*_\theta) \, r_\theta(-s^*_\theta) + u_\theta(-s^*_\theta) \, c}{1 - \gammah (1 - u_\theta(-s^*_\theta)) f_\theta(-s^*_\theta)}
        \,.
    \end{align*}
    Starting from a prior~$\vlambda$ over user types, the user's value is
    \begin{equation*}
        V_\theta(\vlambda) \coloneqq Q_\theta\big(\argmax_s \Qa(\vlambda, s)\big)
        \,.
    \end{equation*}

    Given $V_\theta$, we now explore the user's best strategy that maximizes~$V_\theta$, leading to the equilibrium notion defined in \cref{eq:ae_equil_signaling}. Note that $Q_\theta(s^*_\theta) \ge Q_\theta(-s^*_\theta)$. Therefore, the optimal strategy is to select $(f_\theta(-s^*_\theta), u_\theta(-s^*_\theta))$ such that $s^*_\theta \in \argmax_s \Qa(\vlambda, s)$. In the equilibrium, \cref{thm:alg_best_response_signaling} implies that this is only possible when
    \begin{equation}
        \label{eq:_proof_possible_switch_signaling}
        h_\theta(s^*_\theta) \lambda_\theta = \frac{\lambda_\theta}{1 - \gammaa (1 - u_\theta(-s^*_\theta)) f_\theta(-s^*_\theta)} \Big[\frac{1}{\alpha_\theta(s^*_\theta)} - \frac{f_\theta(-s^*_\theta)}{\alpha_\theta(-s^*_\theta)}\Big]
        \ge -\ip{\vlambda_{-\theta}}{\vh_{-\theta}^\AE(s^*_\theta)}
        \,.
    \end{equation}
    Here, we generalized $\vh$ in \cref{eq:def_h_signaling} by defining $\vh(s) = \One\{s = a\}\cdot\vh - \One\{s = b\}\cdot\vh$. \cref{eq:_proof_possible_switch_signaling} is a bilinear constraint over $(f_\theta(-s^*_\theta), u_\theta(-s^*_\theta))$:
    \begin{equation*}
        \frac{\lambda_\theta}{\alpha_\theta(s^*_\theta)} + \ip{\vlambda_{-\theta}}{\vh_{-\theta}^\AE(s^*_\theta)} 
        - f_\theta(-s^*_\theta) \, \Big(\frac{\lambda_\theta}{\alpha_\theta(-s^*_\theta)} + \gammaa (1 - u_\theta(-s^*_\theta)) \ip{\vlambda_{-\theta}}{\vh_{-\theta}^\AE(s^*_\theta)}\Big)
        \ge 0
        \,.
    \end{equation*}
    Any $(f_\theta(-s^*_\theta), u_\theta(-s^*_\theta))$ that meets this condition is the user's best response. If the condition does not hold, then $V_\theta(\vlambda) = Q_\theta(-s^*_\theta)$. In this case, one can verify that for a fixed~$u_\theta(-s^*_\theta)$, the sign of the derivative $\pd{Q_\theta(-s^*_\theta)}{f_\theta(-s^*_\theta)}$ does not depend on~$f_\theta(-s^*_\theta)$. Similarly, for a fixed~$f_\theta(-s^*_\theta)$, the sign of the derivative $\pd{Q_\theta(-s^*_\theta)}{u_\theta(-s^*_\theta)}$ does not depend on~$u_\theta(-s^*_\theta)$. Therefore, the optimal strategy is one of the following three edge cases:
    \begin{align*}
        &u_\theta(-s^*_\theta) = 0, \, f_\theta(-s^*_\theta) = 1: \;  Q_\theta(-s^*_\theta) = \frac{r_\theta(s^*_\theta)}{1 - \gammah} 
        + \frac{r_\theta(-s^*_\theta) - r_\theta(s^*_\theta)}{1 - \gammah} \\
        &u_\theta(-s^*_\theta) = 0, \, f_\theta(-s^*_\theta) = 0: \;  Q_\theta(-s^*_\theta) = \frac{r_\theta(s^*_\theta)}{1 - \gammah} 
        - r_\theta(s^*_\theta)  \\
        &u_\theta(-s^*_\theta) = 1, \, f_\theta(-s^*_\theta) \to \frac{\alpha_\theta(-s^*_\theta)}{\alpha_\theta(s^*_\theta)}: \;  Q_\theta(-s^*_\theta) = \frac{r_\theta(s^*_\theta)}{1 - \gammah} 
        + \frac{\alpha_\theta(-s^*_\theta)}{\alpha_\theta(s^*_\theta)} \, r_\theta(-s^*_\theta) - r_\theta(s^*_\theta) - c  
        \,.
    \end{align*}
    If $c > \frac{\alpha_\theta(-s^*_\theta)}{\alpha_\theta(s^*_\theta)} \, r_\theta(-s^*_\theta)$, then the third case is dominated by the second case and the problem reduces to the case with no signaling. When $c < \frac{\alpha_\theta(-s^*_\theta)}{\alpha_\theta(s^*_\theta)} \, r_\theta(-s^*_\theta)$, the user's best response is
    \begin{equation*}
        (u_\theta^\AE(-s^*_\theta), f_\theta^\AE(-s^*_\theta)) = \begin{cases}
            (1, \to \frac{\alpha_\theta(-s^*_\theta)}{\alpha_\theta(s^*_\theta)}) \,, & \gammah > \frac{c + r_\theta(-s^*_\theta) \big(1 - \frac{\alpha_\theta(-s^*_\theta)}{\alpha_\theta(s^*_\theta)}\big)}{c + r_\theta(s^*_\theta) - r_\theta(-s^*_\theta)\frac{\alpha_\theta(-s^*_\theta)}{\alpha_\theta(s^*_\theta)}} \,, \\
            (0, 1) & \text{o.w.}
        \end{cases}
    \end{equation*}
    Using specific values for type~$1$ and type~$2$ users above will complete the proof.
\end{proofEnd}
When signaling via incurred costs is allowed, the steerable set is defined by bilinear constraints over the user strategy. Several key insights follow. First, compared to \cref{thm:user_best_response_ae}, the projection of the steerable set onto the $f_\theta$ dimension is at least as large with costly signaling as without. Second, for both user types, incurring a cost can expand the steerable set when the margin works against them. Specifically, for type~$1$ (type~$2$) users with $m_\theta^\AE > 0$ ($m_\theta^\AE < 0$), if $(x, y) \in F_\theta$, then so is any $(x, y')$ with $y' \ge y$. Finally, as in the no-signaling case, the steerable set for type~1 users is nonempty if and only if $\lambda_\theta \ge \alpha_\theta(b) , m_\theta^\AE$. The proof parallels the argument in \cref{cor:steerable_set_nonempty}.

When the steerable set is empty and the signaling cost is sufficiently small, \cref{thm:user_best_response_ae_signaling} contrasts sharply with \cref{thm:user_best_response_ae}. In this case, sufficiently foresighted users optimally choose to incur the cost and \emph{partially} engage with undesired content. This allows them to decouple type signaling from reward consumption: they fully communicate their type by paying the cost, while limiting engagement. Formally, for type~1 users, we state the following result under an empty steerable set:
\begin{corollary}
\label{cor:alg_best_response_empty_steerable_ae_signaling}
For a user of type~$\theta \in \Theta_1$, in any equilibrium under algorithmic entry with costly signaling where $m_\theta^\AE > \lambda_\theta / \alpha_\theta(b)$ and the signaling cost is sufficiently small, the user's strategy is
\begin{align*}
    &f_\theta^\AE(b) = 1 \,, \, u_\theta^\AE(b) = 0 \,, \\
    &f_\theta^\AE(a) \to \One\{\gammah < \gammahc \} \cdot \frac{\alpha_\theta(a)}{\alpha_\theta(b)} \,, \, u_\theta^\AE(a) = 1
    \,.
\end{align*}
\end{corollary}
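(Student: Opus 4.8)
The plan is to obtain this corollary as a direct specialization of \cref{thm:user_best_response_ae_signaling} to type~$1$ users, for whom $s^*_\theta = b$ and $-s^*_\theta = a$. All of the substantive work already resides in that theorem; what remains is to (i) convert the margin hypothesis into emptiness of the steerable set and (ii) read off the empty-set branch of the theorem's case analysis.

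First I would invoke the nonemptiness criterion recorded just after \cref{thm:user_best_response_ae_signaling} (established by an argument paralleling \cref{cor:steerable_set_nonempty}): for $\theta \in \Theta_1$, one has $F_\theta \neq \emptyset$ if and only if $\lambda_\theta \ge \alpha_\theta(b)\, m_\theta^\AE$. The hypothesis $m_\theta^\AE > \lambda_\theta/\alpha_\theta(b)$ is precisely $\lambda_\theta < \alpha_\theta(b)\, m_\theta^\AE$, so $F_\theta = \emptyset$, placing us squarely in the empty-steerable-set regime. Next I would match ``sufficiently small cost'' to the theorem's standing threshold: specializing $c < \frac{\alpha_\theta(-s^*_\theta)}{\alpha_\theta(s^*_\theta)}\, r_\theta(-s^*_\theta)$ to type~$1$ gives $c < \frac{\alpha_\theta(a)}{\alpha_\theta(b)}\, r_\theta(a)$, the regime in which paying the cost and partially engaging dominates full disengagement; taking $c$ below this bound realizes the ``sufficiently small'' condition.

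With $F_\theta = \emptyset$ and $\gammah \neq \gammahc$, \cref{thm:user_best_response_ae_signaling} pins down a unique best response, so every equilibrium must play it; this is what upgrades the conclusion to hold in \emph{any} equilibrium. The $s^*_\theta = b$ coordinate yields $(f_\theta^\AE(b), u_\theta^\AE(b)) = (1,0)$ directly, while the $-s^*_\theta = a$ coordinate gives the two-way split---$(\frac{\alpha_\theta(a)}{\alpha_\theta(b)}, 1)$ when $\gammah > \gammahc$ and $(1,0)$ when $\gammah < \gammahc$---which I would then rewrite in the compact indicator form of the statement. The proof is thus pure bookkeeping; the only point demanding attention is the clean identification of ``sufficiently small cost'' with the threshold $c < \frac{\alpha_\theta(a)}{\alpha_\theta(b)}\, r_\theta(a)$ and confirming that the strict margin inequality keeps us off the boundary case, and I anticipate no obstacle beyond this.
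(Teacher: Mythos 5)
Your route is exactly the paper's: the corollary carries no separate proof and is presented as an immediate specialization of \cref{thm:user_best_response_ae_signaling} to $\theta \in \Theta_1$ (so $s^*_\theta = b$, $-s^*_\theta = a$), with the margin hypothesis $m_\theta^\AE > \lambda_\theta/\alpha_\theta(b)$ forcing $F_\theta = \emptyset$ via the nonemptiness criterion $F_\theta \neq \emptyset \iff \lambda_\theta \ge \alpha_\theta(b)\, m_\theta^\AE$, and ``sufficiently small cost'' instantiated as $c < \frac{\alpha_\theta(a)}{\alpha_\theta(b)}\, r_\theta(a)$. All of that bookkeeping is right, as is the observation that uniqueness of the best response in the empty-steerable-set regime is what makes the conclusion hold in \emph{any} such equilibrium.

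The one step you wave through is the last one, and it does not go through as written. From the theorem you correctly extract the branches $(f_\theta^\AE(a), u_\theta^\AE(a)) = (\to \tfrac{\alpha_\theta(a)}{\alpha_\theta(b)},\, 1)$ when $\gammah > \gammahc$ and $(1, 0)$ when $\gammah < \gammahc$. But the corollary's displayed formula sets $u_\theta^\AE(a) = 1$ in \emph{both} cases and attaches the value $\tfrac{\alpha_\theta(a)}{\alpha_\theta(b)}$ to the event $\gammah < \gammahc$; evaluating it at $\gammah > \gammahc$ gives $(\to 0,\, 1)$ rather than $(\to \tfrac{\alpha_\theta(a)}{\alpha_\theta(b)},\, 1)$, and at $\gammah < \gammahc$ it gives $(\to \tfrac{\alpha_\theta(a)}{\alpha_\theta(b)},\, 1)$ rather than $(1,0)$. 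So the ``compact indicator form'' you promise to produce is not the form printed in the statement; the correct compact form is $f_\theta^\AE(a) \to \One\{\gammah > \gammahc\}\cdot\tfrac{\alpha_\theta(a)}{\alpha_\theta(b)} + \One\{\gammah < \gammahc\}$ with $u_\theta^\AE(a) = \One\{\gammah > \gammahc\}$ (this is also the version consistent with the constant-regret corollary that follows, and with \cref{cor:alg_best_response_empty_steerable_ae} in the no-signaling case). The discrepancy is almost certainly a typo in the statement rather than an error in your derivation, but a complete proof must either reconcile the two or flag that the statement needs correcting; asserting the rewrite without checking it is where your argument, taken literally, fails.
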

As an immediate result, one can verify that, similar to the case without signaling, the user must still be sufficiently foresighted to achieve constant regret in every case:
\begin{corollary}
Assuming the cost of signaling is sufficiently small, a user of type~$\theta \in \Theta_1$ will always have constant regret in equilibrium under algorithmic entry with signaling if and only if
\begin{equation*}
    \tauh > \frac{1}{1 - \gammahc} = \frac{r_\theta(b) - \Big(r_\theta(a) \frac{\alpha_\theta(a)}{\alpha_\theta(b)} - c\Big)}{r_\theta(b) - r_\theta(a)}
    \,.
\end{equation*}
\end{corollary}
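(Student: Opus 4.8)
The plan is to mirror the argument for \cref{cor:constant_regret}, adapting it to the signaling dynamics, and to begin by verifying the stated algebraic identity. A direct computation shows that the denominator of $\gammahc$ exceeds its numerator by exactly $r_\theta(b) - r_\theta(a)$, so $1 - \gammahc = (r_\theta(b) - r_\theta(a)) / \big(c + r_\theta(b) - r_\theta(a)\,\alpha_\theta(a)/\alpha_\theta(b)\big)$; inverting this quantity yields the right-hand side of the claimed inequality, and since $\tauh = 1/(1-\gammah)$ is increasing in $\gammah$, the condition $\tauh > 1/(1-\gammahc)$ is equivalent to $\gammah > \gammahc$. It therefore suffices to show that, in a suitable equilibrium, the user's regret is constant precisely when $\gammah > \gammahc$.

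Next I would exhibit an equilibrium in which \cref{cor:alg_best_response_empty_steerable_ae_signaling} applies --- for instance, when type~$1$ users with inconsistent actions and rewards form a small enough fraction of the population that $m_\theta^\AE > \lambda_\theta/\alpha_\theta(b)$, forcing the steerable set to be empty (the nonemptiness criterion $\lambda_\theta \ge \alpha_\theta(b)\,m_\theta^\AE$ being the signaling analogue of \cref{cor:steerable_set_nonempty}). Since the cost is assumed sufficiently small, in particular $c < \frac{\alpha_\theta(a)}{\alpha_\theta(b)} r_\theta(a)$, \cref{thm:user_best_response_ae_signaling} then pins down the type~$1$ user's equilibrium strategy as one of two regimes determined by the sign of $\gammah - \gammahc$. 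As $\gammah \to 1$ we have $\gammah > \gammahc$, so the benchmark strategy $\vf^{\xE,\gammah\to1}$ is the signaling regime $(f_\theta(a), u_\theta(a)) \to (\alpha_\theta(a)/\alpha_\theta(b), 1)$.

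I would then analyze the two cases separately. When $\gammah > \gammahc$, the equilibrium strategy coincides with the benchmark signaling strategy, so the two induce the same trajectory of recommendations and rewards; the key structural fact, from \cref{assump:limit_f_signaling}, is that only type~$1$ users ever pay the cost on content~$a$, so a single paid click fully reveals the user's type, after which the classifier margin becomes negative and the algorithm recommends~$b$ for the remainder of the session. The reward gap relative to the benchmark is thus confined to a bounded initial transient and is independent of $T$, giving constant (indeed zero) regret. When $\gammah < \gammahc$, the user instead fully engages with~$a$ and pays no cost; being indistinguishable from the aligned type~$2$ users, she is recommended~$a$ at every step and collects per-step reward $r_\theta(a)$, whereas the benchmark collects $r_\theta(b)$ after its one-step signal, so the regret grows like $(r_\theta(b) - r_\theta(a))\,T$ and is not constant. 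Combining the two cases establishes the equivalence.

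The step I expect to be the main obstacle is making the $\gammah > \gammahc$ transient rigorously bounded: I must confirm that the cost payment reveals the type in one shot and that the induced posterior drives the algorithm permanently to~$b$ --- via the strict inequality $h_\theta < 0$ obtained by taking $f_\theta(a) \to \alpha_\theta(a)/\alpha_\theta(b)$ from below rather than at the threshold where $h_\theta = 0$ --- so that no further recommendations of~$a$ occur and the regret cannot accumulate. The remaining work, namely the identity for $1/(1-\gammahc)$ and the per-step reward accounting, is routine.
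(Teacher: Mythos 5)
Your proposal is correct and follows essentially the same route as the paper, which treats this corollary as an immediate analogue of \cref{cor:constant_regret}: exhibit an equilibrium with an empty steerable set, invoke \cref{thm:user_best_response_ae_signaling} to reduce to the two regimes separated by $\gammahc$, and observe that the signaling regime reveals the type in one step (so the algorithm switches permanently to~$b$, giving constant regret) while full engagement yields regret growing linearly in~$T$. Your algebraic verification of $1/(1-\gammahc)$ and the one-shot-revelation argument via \cref{assump:limit_f_signaling} are both sound and in fact spell out details the paper leaves implicit.
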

Compared to \cref{cor:constant_regret}, a key insight emerges: costly signaling lowers the burden of alignment. Specifically, the required effective horizon for a type~$1$ user is relatively reduced by
\begin{equation*}
    \frac{r_\theta(a)}{r_\theta(b)}\frac{\alpha_\theta(a)}{\alpha_\theta(b)} - \frac{c}{r_\theta(b)}
    \,.
\end{equation*}

In conclusion, the opportunity to signal by incurring a cost can both expand the steerable set for users and reduce the alignment burden, requiring optimization over a shorter horizon.

\section{Discussion}

We presented a formal framework to examine the burden of alignment in settings where users have inconsistent preferences. Given the vast array of design choices in such contexts, mathematical modeling is essential to understand the trade-offs and inform practice about the limitations of alignment and potential solutions  \citep{dean2024accounting,dean2024recommender}.

Our analysis assumes the platform seeks to maximize engagement or utility---a reasonable goal for self-interested platforms that benefit from user interaction. However, one way to reduce the burden of alignment is to reconsider this objective, if modifiable. Alternatives include optimizing for long-term returns \citep{agarwal2024system}, user enrichment \citep{anwar2024recommendation}, or societal objectives \citep{jia2024embedding}. These approaches often require inferring user mental states that are not directly observable, but must be inferred from behavioral data \citep{kleinberg2024inversion}.

While our study focuses on misalignment between user and algorithmic interests, it is important to note that engagement maximization may still produce unintended outcomes even without explicit misalignment. For example, differences in user feedback rates across content types can inadvertently lead the algorithm to favor certain types of content \citep{dai2024can}.

Our work has several limitations. We simplify the learning process by assuming that users are aware of their rewards and that the algorithm has full knowledge of strategies. Additionally, we focus on a two-sided interaction between the platform and users, while real-world scenarios often include a third side—content creators—who may strategically invest in different content types \citep{immorlica2024clickbait}.

In summary, our work highlights a critical challenge in alignment from the user's perspective. By providing an economic framework, we contribute to a deeper understanding of the limitations of alignment and the importance of modeling human decision-making. This framework can also inform human–computer interaction design~\citep{wang2020human,mozannar2024reading} on how to better accommodate diverse user preferences and behavior.

\section*{Acknowledgments}
This project began through formative research conversations with Moritz Hardt. It later took a new direction following the author’s discussion with Jason Hartline at FORC'23. The author also thanks Jon Kleinberg, as well as participants of the EC'25 Workshop on Information Economics x Large Language Models and the EC'25 Workshop on Swap Regret and Strategic Learning, for their helpful feedback on a poster version of this work. The author is further grateful to the anonymous reviewers for their thoughtful comments and deep engagement with the paper.

\newpage
\bibliographystyle{unsrtnat}
\bibliography{refs}

\begin{thebibliography}{35}
\providecommand{\natexlab}[1]{#1}
\providecommand{\url}[1]{\texttt{#1}}
\expandafter\ifx\csname urlstyle\endcsname\relax
  \providecommand{\doi}[1]{doi: #1}\else
  \providecommand{\doi}{doi: \begingroup \urlstyle{rm}\Url}\fi

\bibitem[Bradley and Terry(1952)]{bt}
Ralph~Allan Bradley and Milton~E Terry.
\newblock Rank analysis of incomplete block designs: I. the method of paired comparisons.
\newblock \emph{Biometrika}, 39\penalty0 (3/4):\penalty0 324--345, 1952.

\bibitem[Christiano et~al.(2017)Christiano, Leike, Brown, Martic, Legg, and Amodei]{christiano2017deep}
Paul~F Christiano, Jan Leike, Tom Brown, Miljan Martic, Shane Legg, and Dario Amodei.
\newblock Deep reinforcement learning from human preferences.
\newblock \emph{Advances in neural information processing systems}, 30, 2017.

\bibitem[Ouyang et~al.(2022)Ouyang, Wu, Jiang, Almeida, Wainwright, Mishkin, Zhang, Agarwal, Slama, Ray, et~al.]{ouyang2022training}
Long Ouyang, Jeffrey Wu, Xu~Jiang, Diogo Almeida, Carroll Wainwright, Pamela Mishkin, Chong Zhang, Sandhini Agarwal, Katarina Slama, Alex Ray, et~al.
\newblock Training language models to follow instructions with human feedback.
\newblock \emph{Advances in Neural Information Processing Systems}, 35:\penalty0 27730--27744, 2022.

\bibitem[Rafailov et~al.(2024)Rafailov, Sharma, Mitchell, Manning, Ermon, and Finn]{rafailov2024direct}
Rafael Rafailov, Archit Sharma, Eric Mitchell, Christopher~D Manning, Stefano Ermon, and Chelsea Finn.
\newblock Direct preference optimization: Your language model is secretly a reward model.
\newblock \emph{Advances in Neural Information Processing Systems}, 36, 2024.

\bibitem[Besbes et~al.(2024)Besbes, Kanoria, and Kumar]{besbes2024fault}
Omar Besbes, Yash Kanoria, and Akshit Kumar.
\newblock The fault in our recommendations: On the perils of optimizing the measurable.
\newblock In \emph{Proceedings of the 18th ACM Conference on Recommender Systems}, pages 200--208, 2024.

\bibitem[Kleinberg et~al.(2024{\natexlab{a}})Kleinberg, Mullainathan, and Raghavan]{kmr}
Jon Kleinberg, Sendhil Mullainathan, and Manish Raghavan.
\newblock The challenge of understanding what users want: Inconsistent preferences and engagement optimization.
\newblock \emph{Management science}, 70\penalty0 (9):\penalty0 6336--6355, 2024{\natexlab{a}}.

\bibitem[Agan et~al.(2023)Agan, Davenport, Ludwig, and Mullainathan]{agan2023automating}
Amanda~Y Agan, Diag Davenport, Jens Ludwig, and Sendhil Mullainathan.
\newblock Automating automaticity: How the context of human choice affects the extent of algorithmic bias.
\newblock Technical report, National Bureau of Economic Research, 2023.

\bibitem[Agarwal et~al.(2024)Agarwal, Usunier, Lazaric, and Nickel]{agarwal2024system}
Arpit Agarwal, Nicolas Usunier, Alessandro Lazaric, and Maximilian Nickel.
\newblock System-2 recommenders: Disentangling utility and engagement in recommendation systems via temporal point-processes.
\newblock In \emph{The 2024 ACM Conference on Fairness, Accountability, and Transparency}, pages 1763--1773, 2024.

\bibitem[Anwar et~al.(2024)Anwar, Dhillon, and Schoenebeck]{anwar2024recommendation}
Md~Sanzeed Anwar, Paramveer~S Dhillon, and Grant Schoenebeck.
\newblock Recommendation and temptation.
\newblock \emph{arXiv preprint arXiv:2412.10595}, 2024.

\bibitem[Zhuang and Hadfield-Menell(2020)]{zhuang2020consequences}
Simon Zhuang and Dylan Hadfield-Menell.
\newblock Consequences of misaligned ai.
\newblock \emph{Advances in Neural Information Processing Systems}, 33:\penalty0 15763--15773, 2020.

\bibitem[Milli et~al.(2021)Milli, Belli, and Hardt]{milli2021optimizing}
Smitha Milli, Luca Belli, and Moritz Hardt.
\newblock From optimizing engagement to measuring value.
\newblock In \emph{Proceedings of the 2021 ACM conference on fairness, accountability, and transparency}, pages 714--722, 2021.

\bibitem[Bakker et~al.(2022)Bakker, Chadwick, Sheahan, Tessler, Campbell-Gillingham, Balaguer, McAleese, Glaese, Aslanides, Botvinick, et~al.]{bakker2022fine}
Michiel Bakker, Martin Chadwick, Hannah Sheahan, Michael Tessler, Lucy Campbell-Gillingham, Jan Balaguer, Nat McAleese, Amelia Glaese, John Aslanides, Matt Botvinick, et~al.
\newblock Fine-tuning language models to find agreement among humans with diverse preferences.
\newblock \emph{Advances in Neural Information Processing Systems}, 35:\penalty0 38176--38189, 2022.

\bibitem[Hardt et~al.(2016)Hardt, Megiddo, Papadimitriou, and Wootters]{hardt2016strategic}
Moritz Hardt, Nimrod Megiddo, Christos Papadimitriou, and Mary Wootters.
\newblock Strategic classification.
\newblock In \emph{Proceedings of the 2016 ACM conference on innovations in theoretical computer science}, pages 111--122, 2016.

\bibitem[Dong et~al.(2018)Dong, Roth, Schutzman, Waggoner, and Wu]{dong2018strategic}
Jinshuo Dong, Aaron Roth, Zachary Schutzman, Bo~Waggoner, and Zhiwei~Steven Wu.
\newblock Strategic classification from revealed preferences.
\newblock In \emph{Proceedings of the 2018 ACM Conference on Economics and Computation}, pages 55--70, 2018.

\bibitem[Haghtalab et~al.(2022)Haghtalab, Lykouris, Nietert, and Wei]{haghtalab2022learning}
Nika Haghtalab, Thodoris Lykouris, Sloan Nietert, and Alexander Wei.
\newblock Learning in stackelberg games with non-myopic agents.
\newblock In \emph{Proceedings of the 23rd ACM Conference on Economics and Computation}, pages 917--918, 2022.

\bibitem[Donahue et~al.(2024)Donahue, Immorlica, Jagadeesan, Lucier, and Slivkins]{donahue2024impact}
Kate Donahue, Nicole Immorlica, Meena Jagadeesan, Brendan Lucier, and Aleksandrs Slivkins.
\newblock Impact of decentralized learning on player utilities in stackelberg games.
\newblock \emph{arXiv preprint arXiv:2403.00188}, 2024.

\bibitem[Zrnic et~al.(2021)Zrnic, Mazumdar, Sastry, and Jordan]{zrnic2021leads}
Tijana Zrnic, Eric Mazumdar, Shankar Sastry, and Michael Jordan.
\newblock Who leads and who follows in strategic classification?
\newblock \emph{Advances in Neural Information Processing Systems}, 34:\penalty0 15257--15269, 2021.

\bibitem[Hartline and Roughgarden(2008)]{hartline2008optimal}
Jason~D Hartline and Tim Roughgarden.
\newblock Optimal mechanism design and money burning.
\newblock In \emph{Proceedings of the fortieth annual ACM symposium on Theory of computing}, pages 75--84, 2008.

\bibitem[Dean et~al.(2024{\natexlab{a}})Dean, Dong, Jagadeesan, and Leqi]{dean2024accounting}
Sarah Dean, Evan Dong, Meena Jagadeesan, and Liu Leqi.
\newblock Accounting for ai and users shaping one another: The role of mathematical models.
\newblock \emph{arXiv preprint arXiv:2404.12366}, 2024{\natexlab{a}}.

\bibitem[Haupt et~al.(2023)Haupt, Hadfield-Menell, and Podimata]{haupt2023recommending}
Andreas Haupt, Dylan Hadfield-Menell, and Chara Podimata.
\newblock Recommending to strategic users.
\newblock \emph{arXiv preprint arXiv:2302.06559}, 2023.

\bibitem[Cen et~al.(2024)Cen, Ilyas, Allen, Li, and Madry]{cen2024measuring}
Sarah~H Cen, Andrew Ilyas, Jennifer Allen, Hannah Li, and Aleksander Madry.
\newblock Measuring strategization in recommendation: Users adapt their behavior to shape future content.
\newblock \emph{arXiv preprint arXiv:2405.05596}, 2024.

\bibitem[Cen et~al.(2023)Cen, Ilyas, and Madry]{cen2023user}
Sarah~H Cen, Andrew Ilyas, and Aleksander Madry.
\newblock User strategization and trustworthy algorithms.
\newblock \emph{arXiv preprint arXiv:2312.17666}, 2023.

\bibitem[Dean et~al.(2024{\natexlab{b}})Dean, Dong, Jagadeesan, and Leqi]{dean2024recommender}
Sarah Dean, Evan Dong, Meena Jagadeesan, and Liu Leqi.
\newblock Recommender systems as dynamical systems: Interactions with viewers and creators.
\newblock In \emph{Workshop on Recommendation Ecosystems: Modeling, Optimization and Incentive Design}, 2024{\natexlab{b}}.

\bibitem[Jia et~al.(2024)Jia, Lam, Mai, Hancock, and Bernstein]{jia2024embedding}
Chenyan Jia, Michelle~S Lam, Minh~Chau Mai, Jeffrey~T Hancock, and Michael~S Bernstein.
\newblock Embedding democratic values into social media ais via societal objective functions.
\newblock \emph{Proceedings of the ACM on Human-Computer Interaction}, 8\penalty0 (CSCW1):\penalty0 1--36, 2024.

\bibitem[Kleinberg et~al.(2024{\natexlab{b}})Kleinberg, Ludwig, Mullainathan, and Raghavan]{kleinberg2024inversion}
Jon Kleinberg, Jens Ludwig, Sendhil Mullainathan, and Manish Raghavan.
\newblock The inversion problem: Why algorithms should infer mental state and not just predict behavior.
\newblock \emph{Perspectives on Psychological Science}, 19\penalty0 (5):\penalty0 827--838, 2024{\natexlab{b}}.

\bibitem[Dai et~al.(2024)Dai, Flanigan, Jagadeesan, Haghtalab, and Podimata]{dai2024can}
Jessica Dai, Bailey Flanigan, Meena Jagadeesan, Nika Haghtalab, and Chara Podimata.
\newblock Can probabilistic feedback drive user impacts in online platforms?
\newblock In \emph{International Conference on Artificial Intelligence and Statistics}, pages 2512--2520. PMLR, 2024.

\bibitem[Immorlica et~al.(2024)Immorlica, Jagadeesan, and Lucier]{immorlica2024clickbait}
Nicole Immorlica, Meena Jagadeesan, and Brendan Lucier.
\newblock Clickbait vs. quality: How engagement-based optimization shapes the content landscape in online platforms.
\newblock In \emph{Proceedings of the ACM on Web Conference 2024}, pages 36--45, 2024.

\bibitem[Wang et~al.(2020)Wang, Churchill, Maes, Fan, Shneiderman, Shi, and Wang]{wang2020human}
Dakuo Wang, Elizabeth Churchill, Pattie Maes, Xiangmin Fan, Ben Shneiderman, Yuanchun Shi, and Qianying Wang.
\newblock From human-human collaboration to human-ai collaboration: Designing ai systems that can work together with people.
\newblock In \emph{Extended abstracts of the 2020 CHI conference on human factors in computing systems}, pages 1--6, 2020.

\bibitem[Mozannar et~al.(2024)Mozannar, Bansal, Fourney, and Horvitz]{mozannar2024reading}
Hussein Mozannar, Gagan Bansal, Adam Fourney, and Eric Horvitz.
\newblock Reading between the lines: Modeling user behavior and costs in ai-assisted programming.
\newblock In \emph{Proceedings of the CHI Conference on Human Factors in Computing Systems}, pages 1--16, 2024.

\bibitem[Zhao et~al.(2023)Zhao, Zhu, Jiao, and Jordan]{zhao2023online}
Geng Zhao, Banghua Zhu, Jiantao Jiao, and Michael Jordan.
\newblock Online learning in stackelberg games with an omniscient follower.
\newblock In \emph{International Conference on Machine Learning}, pages 42304--42316. PMLR, 2023.

\bibitem[H{\'e}bert and Zhong(2022)]{hebert2022engagement}
Benjamin H{\'e}bert and Weijie Zhong.
\newblock Engagement maximization.
\newblock \emph{arXiv preprint arXiv:2207.00685}, 2022.

\bibitem[Hajiaghayi et~al.(2024)Hajiaghayi, Mahdavi, Rezaei, and Shin]{hajiaghayi2024regret}
Mohammad Hajiaghayi, Mohammad Mahdavi, Keivan Rezaei, and Suho Shin.
\newblock Regret analysis of repeated delegated choice.
\newblock In \emph{Proceedings of the AAAI Conference on Artificial Intelligence}, volume~38, pages 9757--9764, 2024.

\bibitem[Saig and Rosenfeld(2023)]{saig2023learning}
Eden Saig and Nir Rosenfeld.
\newblock Learning to suggest breaks: sustainable optimization of long-term user engagement.
\newblock In \emph{International Conference on Machine Learning}, pages 29671--29696. PMLR, 2023.

\bibitem[Dean and Morgenstern(2022)]{dean2022preference}
Sarah Dean and Jamie Morgenstern.
\newblock Preference dynamics under personalized recommendations.
\newblock In \emph{Proceedings of the 23rd ACM Conference on Economics and Computation}, pages 795--816, 2022.

\bibitem[Carroll et~al.(2024)Carroll, Foote, Siththaranjan, Russell, and Dragan]{carroll2024ai}
Micah Carroll, Davis Foote, Anand Siththaranjan, Stuart Russell, and Anca Dragan.
\newblock Ai alignment with changing and influenceable reward functions.
\newblock \emph{arXiv preprint arXiv:2405.17713}, 2024.

\end{thebibliography}

\newpage
\section*{NeurIPS Paper Checklist}

\begin{enumerate}

\item {\bf Claims}
    \item[] Question: Do the main claims made in the abstract and introduction accurately reflect the paper's contributions and scope?
    \item[] Answer: \answerYes{} 
    \item[] Justification: The abstract and the introduction exactly reflect the claims made throughout the paper.
    \item[] Guidelines:
    \begin{itemize}
        \item The answer NA means that the abstract and introduction do not include the claims made in the paper.
        \item The abstract and/or introduction should clearly state the claims made, including the contributions made in the paper and important assumptions and limitations. A No or NA answer to this question will not be perceived well by the reviewers. 
        \item The claims made should match theoretical and experimental results, and reflect how much the results can be expected to generalize to other settings. 
        \item It is fine to include aspirational goals as motivation as long as it is clear that these goals are not attained by the paper. 
    \end{itemize}

\item {\bf Limitations}
    \item[] Question: Does the paper discuss the limitations of the work performed by the authors?
    \item[] Answer: \answerYes{} 
    \item[] Justification: During explaining our setting, reviewing related work, and the final discussion, we explicitly discuss the limitations of our work.
    \item[] Guidelines:
    \begin{itemize}
        \item The answer NA means that the paper has no limitation while the answer No means that the paper has limitations, but those are not discussed in the paper. 
        \item The authors are encouraged to create a separate "Limitations" section in their paper.
        \item The paper should point out any strong assumptions and how robust the results are to violations of these assumptions (e.g., independence assumptions, noiseless settings, model well-specification, asymptotic approximations only holding locally). The authors should reflect on how these assumptions might be violated in practice and what the implications would be.
        \item The authors should reflect on the scope of the claims made, e.g., if the approach was only tested on a few datasets or with a few runs. In general, empirical results often depend on implicit assumptions, which should be articulated.
        \item The authors should reflect on the factors that influence the performance of the approach. For example, a facial recognition algorithm may perform poorly when image resolution is low or images are taken in low lighting. Or a speech-to-text system might not be used reliably to provide closed captions for online lectures because it fails to handle technical jargon.
        \item The authors should discuss the computational efficiency of the proposed algorithms and how they scale with dataset size.
        \item If applicable, the authors should discuss possible limitations of their approach to address problems of privacy and fairness.
        \item While the authors might fear that complete honesty about limitations might be used by reviewers as grounds for rejection, a worse outcome might be that reviewers discover limitations that aren't acknowledged in the paper. The authors should use their best judgment and recognize that individual actions in favor of transparency play an important role in developing norms that preserve the integrity of the community. Reviewers will be specifically instructed to not penalize honesty concerning limitations.
    \end{itemize}

\item {\bf Theory assumptions and proofs}
    \item[] Question: For each theoretical result, does the paper provide the full set of assumptions and a complete (and correct) proof?
    \item[] Answer: \answerYes{} 
    \item[] Justification: We present the assumptions very clearly before every theory. All the theorems are linked to their proof in the appendix.
    \item[] Guidelines:
    \begin{itemize}
        \item The answer NA means that the paper does not include theoretical results. 
        \item All the theorems, formulas, and proofs in the paper should be numbered and cross-referenced.
        \item All assumptions should be clearly stated or referenced in the statement of any theorems.
        \item The proofs can either appear in the main paper or the supplemental material, but if they appear in the supplemental material, the authors are encouraged to provide a short proof sketch to provide intuition. 
        \item Inversely, any informal proof provided in the core of the paper should be complemented by formal proofs provided in appendix or supplemental material.
        \item Theorems and Lemmas that the proof relies upon should be properly referenced. 
    \end{itemize}

    \item {\bf Experimental result reproducibility}
    \item[] Question: Does the paper fully disclose all the information needed to reproduce the main experimental results of the paper to the extent that it affects the main claims and/or conclusions of the paper (regardless of whether the code and data are provided or not)?
    \item[] Answer: \answerNA{} 
    \item[] Justification: This is a theory paper.
    \item[] Guidelines:
    \begin{itemize}
        \item The answer NA means that the paper does not include experiments.
        \item If the paper includes experiments, a No answer to this question will not be perceived well by the reviewers: Making the paper reproducible is important, regardless of whether the code and data are provided or not.
        \item If the contribution is a dataset and/or model, the authors should describe the steps taken to make their results reproducible or verifiable. 
        \item Depending on the contribution, reproducibility can be accomplished in various ways. For example, if the contribution is a novel architecture, describing the architecture fully might suffice, or if the contribution is a specific model and empirical evaluation, it may be necessary to either make it possible for others to replicate the model with the same dataset, or provide access to the model. In general. releasing code and data is often one good way to accomplish this, but reproducibility can also be provided via detailed instructions for how to replicate the results, access to a hosted model (e.g., in the case of a large language model), releasing of a model checkpoint, or other means that are appropriate to the research performed.
        \item While NeurIPS does not require releasing code, the conference does require all submissions to provide some reasonable avenue for reproducibility, which may depend on the nature of the contribution. For example
        \begin{enumerate}
            \item If the contribution is primarily a new algorithm, the paper should make it clear how to reproduce that algorithm.
            \item If the contribution is primarily a new model architecture, the paper should describe the architecture clearly and fully.
            \item If the contribution is a new model (e.g., a large language model), then there should either be a way to access this model for reproducing the results or a way to reproduce the model (e.g., with an open-source dataset or instructions for how to construct the dataset).
            \item We recognize that reproducibility may be tricky in some cases, in which case authors are welcome to describe the particular way they provide for reproducibility. In the case of closed-source models, it may be that access to the model is limited in some way (e.g., to registered users), but it should be possible for other researchers to have some path to reproducing or verifying the results.
        \end{enumerate}
    \end{itemize}

\item {\bf Open access to data and code}
    \item[] Question: Does the paper provide open access to the data and code, with sufficient instructions to faithfully reproduce the main experimental results, as described in supplemental material?
    \item[] Answer: \answerNA{} 
    \item[] Justification: This is a theory paper.
    \item[] Guidelines:
    \begin{itemize}
        \item The answer NA means that paper does not include experiments requiring code.
        \item Please see the NeurIPS code and data submission guidelines (\url{https://nips.cc/public/guides/CodeSubmissionPolicy}) for more details.
        \item While we encourage the release of code and data, we understand that this might not be possible, so “No” is an acceptable answer. Papers cannot be rejected simply for not including code, unless this is central to the contribution (e.g., for a new open-source benchmark).
        \item The instructions should contain the exact command and environment needed to run to reproduce the results. See the NeurIPS code and data submission guidelines (\url{https://nips.cc/public/guides/CodeSubmissionPolicy}) for more details.
        \item The authors should provide instructions on data access and preparation, including how to access the raw data, preprocessed data, intermediate data, and generated data, etc.
        \item The authors should provide scripts to reproduce all experimental results for the new proposed method and baselines. If only a subset of experiments are reproducible, they should state which ones are omitted from the script and why.
        \item At submission time, to preserve anonymity, the authors should release anonymized versions (if applicable).
        \item Providing as much information as possible in supplemental material (appended to the paper) is recommended, but including URLs to data and code is permitted.
    \end{itemize}

\item {\bf Experimental setting/details}
    \item[] Question: Does the paper specify all the training and test details (e.g., data splits, hyperparameters, how they were chosen, type of optimizer, etc.) necessary to understand the results?
    \item[] Answer: \answerNA{} 
    \item[] Justification: This is a theory paper.
    \item[] Guidelines:
    \begin{itemize}
        \item The answer NA means that the paper does not include experiments.
        \item The experimental setting should be presented in the core of the paper to a level of detail that is necessary to appreciate the results and make sense of them.
        \item The full details can be provided either with the code, in appendix, or as supplemental material.
    \end{itemize}

\item {\bf Experiment statistical significance}
    \item[] Question: Does the paper report error bars suitably and correctly defined or other appropriate information about the statistical significance of the experiments?
    \item[] Answer: \answerNA{} 
    \item[] Justification: This is a theory paper.
    \item[] Guidelines:
    \begin{itemize}
        \item The answer NA means that the paper does not include experiments.
        \item The authors should answer "Yes" if the results are accompanied by error bars, confidence intervals, or statistical significance tests, at least for the experiments that support the main claims of the paper.
        \item The factors of variability that the error bars are capturing should be clearly stated (for example, train/test split, initialization, random drawing of some parameter, or overall run with given experimental conditions).
        \item The method for calculating the error bars should be explained (closed form formula, call to a library function, bootstrap, etc.)
        \item The assumptions made should be given (e.g., Normally distributed errors).
        \item It should be clear whether the error bar is the standard deviation or the standard error of the mean.
        \item It is OK to report 1-sigma error bars, but one should state it. The authors should preferably report a 2-sigma error bar than state that they have a 96\% CI, if the hypothesis of Normality of errors is not verified.
        \item For asymmetric distributions, the authors should be careful not to show in tables or figures symmetric error bars that would yield results that are out of range (e.g. negative error rates).
        \item If error bars are reported in tables or plots, The authors should explain in the text how they were calculated and reference the corresponding figures or tables in the text.
    \end{itemize}

\item {\bf Experiments compute resources}
    \item[] Question: For each experiment, does the paper provide sufficient information on the computer resources (type of compute workers, memory, time of execution) needed to reproduce the experiments?
    \item[] Answer: \answerNA{} 
    \item[] Justification: This is a theory paper.
    \item[] Guidelines:
    \begin{itemize}
        \item The answer NA means that the paper does not include experiments.
        \item The paper should indicate the type of compute workers CPU or GPU, internal cluster, or cloud provider, including relevant memory and storage.
        \item The paper should provide the amount of compute required for each of the individual experimental runs as well as estimate the total compute. 
        \item The paper should disclose whether the full research project required more compute than the experiments reported in the paper (e.g., preliminary or failed experiments that didn't make it into the paper). 
    \end{itemize}
    
\item {\bf Code of ethics}
    \item[] Question: Does the research conducted in the paper conform, in every respect, with the NeurIPS Code of Ethics \url{https://neurips.cc/public/EthicsGuidelines}?
    \item[] Answer: \answerYes{} 
    \item[] Justification: Yes, all the authors have read the code of ethics and acted accordingly.
    \item[] Guidelines:
    \begin{itemize}
        \item The answer NA means that the authors have not reviewed the NeurIPS Code of Ethics.
        \item If the authors answer No, they should explain the special circumstances that require a deviation from the Code of Ethics.
        \item The authors should make sure to preserve anonymity (e.g., if there is a special consideration due to laws or regulations in their jurisdiction).
    \end{itemize}

\item {\bf Broader impacts}
    \item[] Question: Does the paper discuss both potential positive societal impacts and negative societal impacts of the work performed?
    \item[] Answer: \answerYes{} 
    \item[] Justification: Yes, we have extensively discussed the broader impact of our work in the introduction and in reviewing related work.
    \item[] Guidelines:
    \begin{itemize}
        \item The answer NA means that there is no societal impact of the work performed.
        \item If the authors answer NA or No, they should explain why their work has no societal impact or why the paper does not address societal impact.
        \item Examples of negative societal impacts include potential malicious or unintended uses (e.g., disinformation, generating fake profiles, surveillance), fairness considerations (e.g., deployment of technologies that could make decisions that unfairly impact specific groups), privacy considerations, and security considerations.
        \item The conference expects that many papers will be foundational research and not tied to particular applications, let alone deployments. However, if there is a direct path to any negative applications, the authors should point it out. For example, it is legitimate to point out that an improvement in the quality of generative models could be used to generate deepfakes for disinformation. On the other hand, it is not needed to point out that a generic algorithm for optimizing neural networks could enable people to train models that generate Deepfakes faster.
        \item The authors should consider possible harms that could arise when the technology is being used as intended and functioning correctly, harms that could arise when the technology is being used as intended but gives incorrect results, and harms following from (intentional or unintentional) misuse of the technology.
        \item If there are negative societal impacts, the authors could also discuss possible mitigation strategies (e.g., gated release of models, providing defenses in addition to attacks, mechanisms for monitoring misuse, mechanisms to monitor how a system learns from feedback over time, improving the efficiency and accessibility of ML).
    \end{itemize}
    
\item {\bf Safeguards}
    \item[] Question: Does the paper describe safeguards that have been put in place for responsible release of data or models that have a high risk for misuse (e.g., pretrained language models, image generators, or scraped datasets)?
    \item[] Answer: \answerNA{} 
    \item[] Justification: This is a theory paper.
    \item[] Guidelines:
    \begin{itemize}
        \item The answer NA means that the paper poses no such risks.
        \item Released models that have a high risk for misuse or dual-use should be released with necessary safeguards to allow for controlled use of the model, for example by requiring that users adhere to usage guidelines or restrictions to access the model or implementing safety filters. 
        \item Datasets that have been scraped from the Internet could pose safety risks. The authors should describe how they avoided releasing unsafe images.
        \item We recognize that providing effective safeguards is challenging, and many papers do not require this, but we encourage authors to take this into account and make a best faith effort.
    \end{itemize}

\item {\bf Licenses for existing assets}
    \item[] Question: Are the creators or original owners of assets (e.g., code, data, models), used in the paper, properly credited and are the license and terms of use explicitly mentioned and properly respected?
    \item[] Answer: \answerNA{} 
    \item[] Justification: This is a theory paper.
    \item[] Guidelines:
    \begin{itemize}
        \item The answer NA means that the paper does not use existing assets.
        \item The authors should cite the original paper that produced the code package or dataset.
        \item The authors should state which version of the asset is used and, if possible, include a URL.
        \item The name of the license (e.g., CC-BY 4.0) should be included for each asset.
        \item For scraped data from a particular source (e.g., website), the copyright and terms of service of that source should be provided.
        \item If assets are released, the license, copyright information, and terms of use in the package should be provided. For popular datasets, \url{paperswithcode.com/datasets} has curated licenses for some datasets. Their licensing guide can help determine the license of a dataset.
        \item For existing datasets that are re-packaged, both the original license and the license of the derived asset (if it has changed) should be provided.
        \item If this information is not available online, the authors are encouraged to reach out to the asset's creators.
    \end{itemize}

\item {\bf New assets}
    \item[] Question: Are new assets introduced in the paper well documented and is the documentation provided alongside the assets?
    \item[] Answer: \answerNA{} 
    \item[] Justification: This is a theory paper.
    \item[] Guidelines:
    \begin{itemize}
        \item The answer NA means that the paper does not release new assets.
        \item Researchers should communicate the details of the dataset/code/model as part of their submissions via structured templates. This includes details about training, license, limitations, etc. 
        \item The paper should discuss whether and how consent was obtained from people whose asset is used.
        \item At submission time, remember to anonymize your assets (if applicable). You can either create an anonymized URL or include an anonymized zip file.
    \end{itemize}

\item {\bf Crowdsourcing and research with human subjects}
    \item[] Question: For crowdsourcing experiments and research with human subjects, does the paper include the full text of instructions given to participants and screenshots, if applicable, as well as details about compensation (if any)? 
    \item[] Answer: \answerNA{} 
    \item[] Justification: This is a theory paper.
    \item[] Guidelines:
    \begin{itemize}
        \item The answer NA means that the paper does not involve crowdsourcing nor research with human subjects.
        \item Including this information in the supplemental material is fine, but if the main contribution of the paper involves human subjects, then as much detail as possible should be included in the main paper. 
        \item According to the NeurIPS Code of Ethics, workers involved in data collection, curation, or other labor should be paid at least the minimum wage in the country of the data collector. 
    \end{itemize}

\item {\bf Institutional review board (IRB) approvals or equivalent for research with human subjects}
    \item[] Question: Does the paper describe potential risks incurred by study participants, whether such risks were disclosed to the subjects, and whether Institutional Review Board (IRB) approvals (or an equivalent approval/review based on the requirements of your country or institution) were obtained?
    \item[] Answer: \answerNA{} 
    \item[] Justification: This is a theory paper.
    \item[] Guidelines:
    \begin{itemize}
        \item The answer NA means that the paper does not involve crowdsourcing nor research with human subjects.
        \item Depending on the country in which research is conducted, IRB approval (or equivalent) may be required for any human subjects research. If you obtained IRB approval, you should clearly state this in the paper. 
        \item We recognize that the procedures for this may vary significantly between institutions and locations, and we expect authors to adhere to the NeurIPS Code of Ethics and the guidelines for their institution. 
        \item For initial submissions, do not include any information that would break anonymity (if applicable), such as the institution conducting the review.
    \end{itemize}

\item {\bf Declaration of LLM usage}
    \item[] Question: Does the paper describe the usage of LLMs if it is an important, original, or non-standard component of the core methods in this research? Note that if the LLM is used only for writing, editing, or formatting purposes and does not impact the core methodology, scientific rigorousness, or originality of the research, declaration is not required.
    \item[] Answer: \answerNo{} 
    \item[] Justification: We only used LLMs for writing and editing.
    \item[] Guidelines:
    \begin{itemize}
        \item The answer NA means that the core method development in this research does not involve LLMs as any important, original, or non-standard components.
        \item Please refer to our LLM policy (\url{https://neurips.cc/Conferences/2025/LLM}) for what should or should not be described.
    \end{itemize}

\end{enumerate}

\newpage
\appendix
\section{Extensive review of related work}

\label{sec:related}

The alignment problem has been studied as a game between users and platforms/algorithms across computer science, social science, and economics. In this section, we provide an overview of the work most relevant to our study.

Much of the literature models user–algorithm interactions as Stackelberg games in various forms. In one common variation, the algorithm leads by choosing a recommendation policy while users myopically best respond by selecting the highest-rewarding option. In this setup, engagement maximization may yield highly suboptimal outcomes for users~\citep{besbes2024fault}, and when the algorithm employs an online learning procedure, its sample complexity for regret minimization can be exponential~\citep{zhao2023online}.

A setting closer to ours features Stackelberg games where the leader (typically a platform) commits to an extensive strategy over multiple interactions, and the follower (typically a user) best responds. Because recommendations occur over several steps, users account for how their response to each item affects future interactions. \citet{haupt2023recommending} call these users \emph{strategic users}---in contrast to \emph{myopic users} who optimize locally. Similar to our work, they show that users (in particular, minorities) tend to engage in behaviors that accentuate differences relative to users with other preference profiles. Both \citet{haupt2023recommending} and \citet{cen2024measuring} provide empirical evidence from lab and online experiments supporting such strategization. Similarly, \citet{cen2023user} analyze a two-player game focusing on how these behaviors help or hurt the platform in the short and long term, while \citet{hebert2022engagement} demonstrate that when the principal maximizes engagement, the agent may, in the worst case, fail to extract any useful information or utility from her interactions.

Our work diverges from these studies in two important ways. First, we treat users as the leaders in the Stackelberg game. We assume that users---particularly system~$2$---can commit to an engagement strategy, and the platform or algorithm observes this commitment (for example, via repeated interactions and prior data). Consequently, we focus on what it takes for users, as leaders, to maximize their reward when engaging with the algorithm. Second, our analysis generalizes to a multi-leader single-follower game, and we study how the presence of other users affects each individual’s strategy in a manner similar to Nash equilibrium.

Strategic classification~\citep{hardt2016strategic,dong2018strategic} offers another related perspective. In that setting, the platform typically publishes a classifier and users strategically respond—often incurring a cost of change—to optimize their outcomes. In contrast, our framework features users who first commit to an engagement strategy. Moreover, while strategic classification typically unfolds at a single time point, we model extensive-form games in which the follower’s strategy depends on the history of interactions. At a high level, both settings found the relative capabilities of the parties---in our case, on how foresighted the users---determine who is following and who is leading~\citep{zrnic2021leads}.

We work in a full-information setting, which means (1) users know their rewards but cannot access appropriate content without the algorithm’s help, and (2) the platform observes users’ strategy profiles---a natural assumption given the scale of user data. In other contexts, users might need to learn their rewards (e.g., in a multi-armed bandit setting~\citep{donahue2024impact}), and platforms might need to learn users’ strategies as well~\citep{haghtalab2022learning,hajiaghayi2024regret}. A subtle distinction exists between these approaches and our framework. Even in our setting, the platform learns about users during each session, but it makes optimal use of each interaction. Remarkably, we show that these optimal posterior updates admit tractable forms.

Our study also informs the design of interactions with platforms/algorithms by analyzing equilibrium strategies. For example, we quantify how giving users the option to expend extra effort---such as completing a challenging task---can ease the burden of alignment. Such design considerations appear in mechanism design in economics~\citep{hartline2008optimal} and in human–computer interaction in computer science~\citep{wang2020human,mozannar2024reading}. For example, breaks during interactions can also promote and sustain long-term engagement \citep{saig2023learning}.

Finally, our work differs from research on preference dynamics~\citep{dean2022preference,carroll2024ai}, where user preferences evolve under the influence of the algorithm. In our setting, users maintain fixed preferences but they strategically adjust some of their actions in response to the algorithm.


\section{Additional statements}

\begin{lemma}[Bellman update]
\label{lem:bellman_update}
The algorithm's optimal policy only needs the posterior~$[\vlambda \mid \hatH; \vf]$ over user types after observing~$\hatH$.
\end{lemma}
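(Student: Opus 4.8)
The plan is to treat each session as a partially observed decision problem in which the hidden state is the user's type~$\theta$, drawn once from~$\vlambda$ and held fixed for the whole session, and then to show that the posterior over~$\theta$ is a sufficient statistic for the binary engagement history~$\hatH$. First I would isolate the two structural facts that make this work: (i)~the user's engagement strategy~$f_\theta$ is Markov, depending only on the currently recommended content and not on the past, and (ii)~the type~$\theta$ never changes within the session. Together these imply that, conditioned on~$\theta$, the observations~$\haty_t$ are independent across steps with $\haty_t \sim \Ber(f_\theta(s_t))$, so the likelihood of any history factorizes as $P(\hatH \mid \theta) = \prod_t f_\theta(s_t)^{\haty_t}\,(1 - f_\theta(s_t))^{1 - \haty_t}$.

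The key step is to show that, for any continuation policy~$\pi$ and any two histories $\hatH, \hatH'$ inducing the same posterior $[\vlambda \mid \hatH; \vf] = [\vlambda \mid \hatH'; \vf]$, the conditional distribution of the future trajectory---and hence of the discounted future utility---coincides. This follows from Bayes' rule: writing the posterior as $\lambda'_\theta \propto \lambda_\theta\, P(\hatH \mid \theta)$, the future is generated by first drawing~$\theta$ from~$\vlambda'$ and then running the fixed user strategy against~$\pi$. Since only~$\vlambda'$ enters this generative description, the continuation value from~$\hatH$ equals $\Qa(\vlambda', s'; \vf, \pi)$ evaluated at the posterior, independent of which particular history produced~$\vlambda'$.

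Given this, I would conclude by a standard dynamic-programming argument. Define the optimal value-to-go at any point as the maximum over continuation policies of the continuation value; by the previous step this is a function of the posterior alone, namely $\Qa(\vlambda', s'; \vf) = \max_\pi \Qa(\vlambda', s'; \vf, \pi)$. Optimizing the one-step lookahead---collecting the immediate expected utility $\E_{\theta \sim \vlambda'}[f_\theta(s')/\alpha_\theta(s')]$ and then transitioning to the Bayesian-updated posteriors $[\vlambda' \mid (s', \haty); \vf]$ for $\haty \in \{0, 1\}$---reproduces exactly the recursion of \cref{eq:Qa_Bellman}, with the maximizer depending on the history only through~$\vlambda'$. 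Hence an optimal policy can be taken to be a function of the posterior.

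The main obstacle I anticipate is making the sufficiency claim fully rigorous: one must verify that the belief update is itself a deterministic function of the current posterior and the new observation~$(s', \haty)$, so that the posterior evolves as a controlled Markov chain on the simplex, and that no information in~$\hatH$ beyond~$\vlambda'$ can improve the expected future utility. Both reduce to the factorization of $P(\hatH \mid \theta)$ together with the fixedness of~$\theta$, but the bookkeeping of the conditioning, and the degenerate cases where an observation has probability zero under the assumed strategy, will require care.
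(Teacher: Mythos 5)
Your proposal is correct and follows essentially the same route as the paper: the paper's proof is simply the one-step Bellman expansion of $\Qa(\vlambda,s;\vf)$, observing that the immediate expected utility and the transition to the updated belief $[\vlambda\mid(s,\haty);\vf]$ depend on the history only through the posterior. Your write-up supplies the standard sufficient-statistic scaffolding (factorized likelihood, belief as a controlled Markov state) that the paper leaves implicit, but the underlying argument is the same.
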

\begin{proof}
    Overloading the notation, define $\Qa(\vlambda, s; \vf) \coloneqq \max_{\pi} \Qa(\vlambda, s; \vf, \pi)$. We can expand $\Qa(\vlambda, s; \vf)$ as
    \begin{align*}
        \Qa(\vlambda, s; \vf) &=
        \E_{\theta \sim \vlambda}\Big[\E_{y \sim (f_\theta \mid s)}\Big[
        y + \gammaa \max_{s'} \max_{\pi} \Qa\Big(\big[\vlambda \mid (s, \One\{y > 0\}); \vf\big], s'; \vf, \pi\Big) 
        \Big]\Big] \\
        &= 
        \E_{\theta \sim \vlambda}\Big[ \frac{f_\theta(s)}{\alpha_\theta(s)} +
        \gammaa \, \E_{\haty \sim \Ber(f_\theta(s))}\Big[
        \max_{s'} \Qa\Big(\big[\vlambda \mid (s, \haty); \vf\big], s'; \vf\Big) 
        \Big]\Big]
        \,.
    \end{align*}
    This shows that the algorithm does not require the entire history for optimal decision making and only needs to update its posterior over user types.
\end{proof}

\begin{theoremEnd}[restate]{thm}[Equilibrium under random entry]
\label{thm:user_best_response_re}
Let $m_\theta^\RE$ be the margin of the algorithm's classifier from the perspective of user type~$\theta$ when all other user types follow the equilibrium strategy under random entry. When $m_\theta^\RE > \lambda_\theta/\alpha_\theta(b)$ and $\gammah \neq r_\theta(a)/r_\theta(b)$ for a user of type~$\theta \in \Theta_1$, the user's best strategy is
\begin{equation*}
    f_\theta^\RE(b) = 1 \,, \quad f_\theta^\RE(a) = \One\Big\{\gammah < \frac{r_\theta(a)}{r_\theta(b)} \Big\} \,.
\end{equation*}
\end{theoremEnd}
\begin{proofEnd}
    We follow a similar notation and conventions as in the proof of \cref{thm:user_best_response_ae}. We use $Q_\theta^\AE$ and $V_\theta^\AE$ to denote the Q- and V-value under algorithmic entry. With this notation, the Bellman update for user type~$\theta$ in \cref{eq:Qh_Bellman} can be written as
    \begin{equation*}
        Q_\theta(\vlambda, s) = f_\theta(s) \, r_\theta(s)
        + \gammah f_\theta(s) \, V_\theta^\AE\big(\vlambda \hcirc \vf(s)\big)
        + \gammah (1 - f_\theta(s)) \, V_\theta^\AE\big(\vlambda \hcirc (1-\vf(s))\big) 
        \,.
    \end{equation*}
    Recall $V_\theta^\AE(\vlambda) = Q_\theta^\AE\big(\argmax_s \Qa(\vlambda, s)\big)$. Then, \cref{lem:q_to_h,lem:f_favors} imply 
    \begin{equation*}
        V_\theta^\AE\big(\vlambda \hcirc (1-\vf(s))\big) = Q_\theta^\AE(-s)
        \,.
    \end{equation*}
    There remains to determine $V_\theta^\AE\big(\vlambda \hcirc \vf(s)\big)$. From the theorem's assumption and using a similar argument as in \cref{cor:steerable_set_nonempty}, we can see that regardless of $f_\theta$, when other user types follow the equilibrium strategy, $\ip{\vlambda}{\vh} \ge 0$. Then, \cref{lem:f_favors} implies $\ip{\vlambda \hcirc \vf(a)}{\vh} \ge 0$, and \cref{lem:q_to_h} gives
    \begin{equation*}
        V_\theta^\AE\big(\vlambda \hcirc \vf(a)\big) = Q_\theta^\AE(a)
        \,.
    \end{equation*}
    We next consider two possibilities for $V_\theta^\AE\big(\vlambda \hcirc \vf(b)\big)$ and show they both yield a similar optimal strategy for~$\theta \in \Theta_1$:
    \begin{itemize}
        \item $V_\theta^\AE\big(\vlambda \hcirc \vf(b)\big) = Q_\theta^\AE(a)$: In this case, we have the following Bellman update:
        \begin{equation*}
            Q_\theta(s) = f_\theta(s) \, r_\theta(s)
            + \gammah f_\theta(s) \, Q_\theta^\AE(a)
            + \gammah (1 - f_\theta(s)) \, Q_\theta^\AE(-s)
            \,.
        \end{equation*}
        Note that we have already found $Q_\theta^\AE$ in the proof of \cref{thm:user_best_response_ae}. Now consider a user type~$\theta \in \Theta_1$ and a prior~$p_1$ over initial content. The user's value is
        \begin{align*}
            V_\theta &= f_\theta(a) \, r_\theta(a) \, p_1(a) + r_\theta(b) \, p_1(b) \\ 
            &+ \gammah (1 - f_\theta(a)) \frac{r_\theta(b)}{1-\gammah} p_1(a) \\
            &+ \gammah (f_\theta(a) \, p_1(a) + p_1(b)) \Big[\frac{r_\theta(b)}{1-\gammah} - \frac{r_\theta(b) - f_\theta(a) \, r_\theta(a)}{1 - \gammah f_\theta(a)}\Big]
            \,.
        \end{align*}
        Calculating $\pd{V_\theta}{f_\theta(a)}$, we find
        \begin{equation*}
            \pd{V_\theta}{f_\theta(a)} = \frac{r_\theta(a) - \gammah r_\theta(b)}{1 - \gammah f_\theta(a)}\Big[p_1(a) + \gammah \frac{f_\theta(a) \, p_1(a) + p_1(b)}{1 - \gammah f_\theta(a)}\Big]
            \,.
        \end{equation*}
        The term inside the brackets is always positive. Therefore, we can conclude that
        \begin{equation*}
            f_\theta^\RE(a) = \begin{cases}
                0 \,, & \gammah > \frac{r_\theta(a)}{r_\theta(b)} \,, \\
                1 \,, & \gammah < \frac{r_\theta(a)}{r_\theta(b)} \,, \\
                \text{any value in }[0, 1] \,, & \gammah < \frac{r_\theta(a)}{r_\theta(b)} \,.
            \end{cases}
        \end{equation*}

        \item $V_\theta^\AE\big(\vlambda \hcirc \vf(b)\big) = Q_\theta^\AE(b)$: In this case, we have the following Bellman update:
        \begin{equation*}
            Q_\theta(s) = f_\theta(s) \, r_\theta(s)
            + \gammah f_\theta(s) \, Q_\theta^\AE(s)
            + \gammah (1 - f_\theta(s)) \, Q_\theta^\AE(-s)
            \,.
        \end{equation*}
        Now consider a user type~$\theta \in \Theta_1$ and a prior~$p_1$ over initial content. The user's value is
        \begin{align*}
            V_\theta &= f_\theta(a) \, r_\theta(a) \, p_1(a) + r_\theta(b) \, p_1(b) \\ 
            &+ \gammah \big(p_1(a) - f_\theta(a) \, p_1(a) + p_1(b)\big) \frac{r_\theta(b)}{1-\gammah} \\
            &+ \gammah \, f_\theta(a) \, p_1(a) \Big[\frac{r_\theta(b)}{1-\gammah} - \frac{r_\theta(b) - f_\theta(a) \, r_\theta(a)}{1 - \gammah f_\theta(a)}\Big]
            \,.
        \end{align*}
        Calculating $\pd{V_\theta}{f_\theta(a)}$, we find
        \begin{equation*}
            \pd{V_\theta}{f_\theta(a)} = \frac{r_\theta(a) - \gammah r_\theta(b)}{1 - \gammah f_\theta(a)}\Big[p_1(a) + \gammah \frac{f_\theta(a) \, p_1(a)}{1 - \gammah f_\theta(a)}\Big]
            \,.
        \end{equation*}
        The term inside the brackets is always positive. So, $f_\theta(a)^\RE$ is similar to the previous case. 
    \end{itemize}
\end{proofEnd}


\section{Missing proofs}

\printProofs

\end{document}